\newcommand{\citet}[1]{\cite{#1}}
\newcommand{\citep}[1]{\cite{#1}}
\newcommand{\abs}[1]{|#1|}
\newcommand{\wh}[1]{\widehat{#1}}
\DeclarePairedDelimiter{\norm}{|\!|\!|}{|\!|\!|}
\DeclarePairedDelimiter{\ceil}{\lceil}{\rceil}
\DeclarePairedDelimiter{\floor}{\lfloor}{\rfloor}
\DeclarePairedDelimiter\ip{\langle}{\rangle}
\newcommand{\T}{\top}
\newcommand{\R}{\mathbb{R}}
\def\P{{\mathbb P}}
\newcommand{\dual}{\lucky^*}
\newcommand{\lucky}{s}
\newcommand{\fold}{{\rm fold}}
\newcommand{\vu}{\boldsymbol{u}}
\newcommand{\vv}{\boldsymbol{v}}
\newcommand{\vw}{\boldsymbol{w}}
\newcommand{\vx}{\boldsymbol{x}}
\newcommand{\vz}{\boldsymbol{z}}
\newcommand{\vr}{\boldsymbol{r}}
\newcommand{\vgamma}{\boldsymbol{\gamma}}
\newcommand{\vomega}{\boldsymbol{\omega}}
\newcommand{\mE}{\boldsymbol{E}}
\newcommand{\mG}{\boldsymbol{G}}
\newcommand{\mX}{\boldsymbol{X}}
\newcommand{\mY}{\boldsymbol{Y}}
\newcommand{\mC}{\boldsymbol{C}}
\newcommand{\mU}{\boldsymbol{U}}
\newcommand{\mB}{\boldsymbol{B}}
\newcommand{\mS}{\boldsymbol{S}}
\newcommand{\mM}{\boldsymbol{M}}
\newcommand{\mP}{\boldsymbol{P}}
\newcommand{\mQ}{\boldsymbol{Q}}
\newcommand{\mI}{\boldsymbol{I}}
\newcommand{\mD}{\boldsymbol{D}}
\newcommand{\mW}{\boldsymbol{W}}
\newcommand{\mZ}{\boldsymbol{Z}}
\newcommand{\mSigma}{\boldsymbol{\Sigma}}
\newcommand{\mLambda}{\boldsymbol{\Lambda}}
\newcommand{\mDelta}{\boldsymbol{\Delta}}
\newcommand{\mR}{\boldsymbol{R}}
\newcommand{\tS}{\mathcal{S}}
\newcommand{\tZ}{\mathcal{Z}}
\newcommand{\tD}{\mathcal{D}}
\newcommand{\tX}{\mathcal{X}}
\newcommand{\tW}{\mathcal{W}}
\newcommand{\tY}{\mathcal{Y}}
\newcommand{\tN}{\mathcal{N}}
\newcommand{\tC}{\mathcal{C}}
\newcommand{\tE}{\mathcal{E}}
\def \epsilon{\varepsilon}
\DeclareMathOperator*{\argmin}{arg\,min}
\newcommand{\Id}{\boldsymbol{1}}
\newtheorem{thm}{Theorem}
\newtheorem{lemma}{Lemma}
\newtheorem{defn}{Definition}
\newtheorem{coro}{Corollary}
\title{\Large\bf Interpolating Convex and Non-Convex
Tensor Decompositions via the Subspace Norm}
\author[1]{Qinqing Zheng}
\author[2]{Ryota Tomioka}
\affil[1]{University of Chicago}
\affil[2]{Toyota Technological Institute at Chicago}
\date{}
\begin{document}
\maketitle
\begin{abstract} 
We consider the problem of recovering a low-rank tensor from its noisy
observation. Previous work has shown a recovery guarantee with signal to noise
ratio $O(n^{\ceil{K/2}/2})$ 
for recovering a $K$th order rank
one tensor of size $n\times \cdots \times n$ by recursive unfolding. In this
paper, we first improve this bound to $O(n^{K/4})$ by a much simpler approach,
but with a more careful analysis. 
Then we propose a new norm called the \textit{subspace} norm, which is based on the Kronecker products of factors
obtained by the proposed simple estimator. The imposed Kronecker structure
allows us to show a nearly ideal $O(\sqrt{n}+\sqrt{H^{K-1}})$
bound,
in which the parameter $H$ controls the blend from the non-convex estimator to
mode-wise nuclear norm minimization. Furthermore, we empirically demonstrate
that the subspace norm achieves the nearly ideal denoising performance even with
$H=O(1)$.

\end{abstract} 

\section{Introduction}
Tensor is a natural way to express higher order interactions for a variety
of data and tensor decomposition has been successfully applied to wide
areas ranging from chemometrics, signal processing,
to neuroimaging; see \citet{KolBad09,Moe11} for a survey. Moreover,
recently it has become an active area in the context of learning latent
variable models \cite{AnaGeHsuKakTel14}. 

Many problems related to tensors, such as, finding the rank, or a best
rank-one approaximation of a tensor is known to be NP hard
\cite{Has90,HilLim13}.
 Nevertheless we can address statistical problems,
such as, how well we can recover a low-rank tensor from its randomly
corrupted version (tensor denoising) or from partial observations (tensor
completion). 
Since we can
convert a tensor into a matrix by an operation known as {\em unfolding},
recent work \cite{TomSuzHayKas11,MuHuaWriGol14,RicMon14,JaiOh14} has shown
that we do get nontrivial guarantees by using some norms or
singular value decompositions. More specifically, Richard \& Montanari \citet{RicMon14} has
shown that when a rank-one $K$th order tensor of size
$n\times\cdots\times n$ is corrupted by standard Gaussian noise,
a nontrivial bound can be shown with high probability if the signal to noise ratio
$\beta/\sigma\succsim n^{\ceil{K/2}/2}$  by a method called the
recursive unfolding\footnote{We say $a_n \succsim b_n$ if there is a
constant $C>0$ such that $a_n\geq C\cdot b_n$. }. Note that 
 $\beta/\sigma\succsim \sqrt{n}$ is sufficient for matrices ($K=2$)
and also for tensors if we use the best rank-one approximation (which is known
to be NP hard) as an estimator. On the other hand, Jain \& Oh \citet{JaiOh14} analyzed the tensor
completion problem and proposed an algorithm that requires
$O(n^{3/2}\cdot{\rm polylog}(n))$
samples for $K=3$; while information theoretically we need at least
$\Omega(n)$ samples and the intractable maximum
likelihood estimator would require $O(n\cdot{\rm
polylog}(n))$ samples. Therefore, in both settings, there is a wide gap between 
the ideal estimator and current polynomial time
algorithms. A subtle question that we will address in this paper is
whether we need to unfold the tensor so that the resulting matrix become
as square as possible, which was the reasoning underlying both \citet{MuHuaWriGol14,RicMon14}.

As a parallel development, non-convex estimators based on alternating minimization
or nonlinear optimization \cite{AcaDunKolMor11,SorVanDeL13}
have been widely applied and have performed very well when appropriately
set up. Therefore it would be of fundamental importance to connect the
wisdom of non-convex estimators with the more theoretically motivated
estimators that recently emerged.

In this paper, we explore such a connection by defining a new norm based on
Kronecker products of factors that can be obtained by simple mode-wise
singular value decomposition (SVD) of unfoldings (see notation section below),
also known as
the higher-order singular value decomposition (HOSVD) \cite{DeLDeMVan00a,
DeLDeMVan00b}. 
We first study the non-asymptotic behavior of the leading
singular vector from the ordinary (rectangular) unfolding $\mX_{(k)}$
and show a nontrivial bound for signal to noise ratio 
 $\beta/\sigma\succsim n^{K/4}$. Thus the
result also applies to odd order tensors confirming a conjecture in
\citet{RicMon14}. Furthermore, this motivates us to use the solution of
mode-wise truncated SVDs to construct a new norm.
We propose the subspace norm, which predicts an unknown low-rank tensor as
a mixture of $K$ low-rank tensors, in which each term takes the form
\begin{align*}
 \fold_k(\mM^{(k)}( \wh{\mP}^{(1)}\otimes \cdots \otimes
 \wh{\mP}^{(k-1)}\otimes  \wh{\mP}^{(k+1)}\otimes \cdots \otimes  \wh{\mP}^{(K)})^\T),
\end{align*}
where $\fold_k$ is the inverse of unfolding $(\cdot)_{(k)}$, $\otimes$
denotes the Kronecker product, and
$\wh{\mP}^{(k)}\in\R^{n\times H}$ is a orthonormal matrix estimated from
the mode-$k$ unfolding of the observed tensor, for $k=1,\ldots,K$; $H$
is a user-defined parameter, and $\mM^{(k)}\in\R^{n\times H^{K-1}}$. Our
theory tells us that with sufficiently high signal-to-noise ratio the
estimated $\wh{\mP}^{(k)}$ spans the true factors.

We highlight our contributions below:
\smallbreak
\noindent 1. We prove that the required signal-to-noise ratio for recovering a
       $K$th order rank one tensor from the ordinary unfolding is
       $O(n^{K/4})$. Our analysis shows a curious two phase behavior: with high probability,
       when $n^{K/4}\precsim \beta/\sigma\precsim n^{K/2}$, the error shows a
       fast decay as $1/\beta^4$; for $\beta/\sigma\succsim n^{K/2}$, the
       error decays slowly as $1/\beta^2$. We confirm this in a
       numerical simulation.\\
\noindent 2. The proposed subspace norm is an interpolation between the intractable
estimators that directly control the rank (e.g.,
HOSVD) and the tractable norm-based
estimators. It becomes equivalent to the latent trace norm 
\citet{TomSuz13} when $H=n$ at the cost of increased
signal-to-noise ratio threshold (see Table \ref{tab:comparison}).\\
\noindent 3. The proposed estimator is more efficient than previously proposed
norm based estimators, because the size of the SVD required in the
algorithm is reduced from $n\times n^{K-1}$ to $n\times H^{K-1}$.\\
\noindent 4. We also empirically demonstrate that the proposed
subspace norm performs nearly optimally for constant order $H$.

\begin{table*}[tb]
 \begin{center}
\caption{Comparison of required signal-to-noise ratio $\beta/\sigma$ of
  different algorithms for recovering a $K$th order rank one tensor of
  size $n\times \cdots\times n$ contaminated by Gaussian noise with
  Standard deviation $\sigma$. See model \eqref{eq:asymm_rank1}.
 The bound for the ordinary unfolding is shown in Corollary
  \ref{coro:phasetransition}. The bound for the subspace norm is shown in
  Theorem \ref{thm:lucky}. The ideal estimator is proven in Appendix \ref{sec:MLE}.}
  \label{tab:comparison}
\smallbreak
  \begin{tabular}[tb]
      { p{2.5cm} | p{2.8cm} | p{2.5cm} | p {3.5cm} | c}
   Overlapped/ Latent nuclear norm\citet{TomSuz13}&
 Recursive unfolding\citet{RicMon14}/ square norm\citet{MuHuaWriGol14} &
 Ordinary unfolding &
 Subspace norm (proposed) & Ideal \\
\hline
$O(n^{(K-1)/2})$ & 
 $O(n^{\ceil{K/2}/2})$ &
 $\boldsymbol{O(n^{K/4})}$ & $\boldsymbol{O(\sqrt{n}+\sqrt{H^{K-1}})}$ &
$\boldsymbol{O(\sqrt{nK\log(K)})}$
  \end{tabular}
 \end{center}
\end{table*}

\subsection*{Notation}
Let $\tX\in\R^{n_1\times n_2\times \cdots\times n_K}$ be a $K$th order tensor. We will often use $n_1=\cdots=n_K=n$ to simplify the
notation but all the results in this paper generalizes to general dimensions.
 The inner product between a pair of tensors is
defined as the inner products of them as vectors; i.e.,
$\ip{\tX,\tW}=\ip{{\rm vec}(\tX),{\rm vec}(\tW)}$.
For $\vu\in\R^{n_1},\vv\in\R^{n_2},\vw\in\R^{n_3}$, $\vu\circ\vv\circ\vw$ denotes the $n_1\times
n_2\times n_3$ {\em rank-one} tensor whose $i,j,k$ entry is $u_iv_jw_k$. The
rank of $\tX$ is the minimum number of rank-one tensors
required to write $\tX$ as a linear combination of them.
 A mode-$k$ fiber of tensor $\tX$
is an $n_k$ dimensional vector that is obtained by fixing all but the
$k$th index of $\tX$. The mode-$k$ 
unfolding $\mX_{(k)}$ of tensor $\tX$ is an $n_k\times \prod_{k'\neq k}n_{k'}$ matrix
constructed by concatenating all the mode-$k$ fibers along columns.
We denote the spectral and Frobenius norms
for matrices by $\|\cdot\|$ and $\|\cdot\|_F$, respectively.

\section{The power of ordinary unfolding}
\label{sec:theory}

\subsection{A perturbation bound for the left singular vector}
We first establish a bound on recovering the left singular vector of a
rank-one $n\times m$ matrix (with $m>n$) perturbed by random Gaussian noise.

Consider the following model known as the information plus noise model \cite{BenNad11}:
\begin{equation}
 \label{eq:infonoise}
    \tilde{\mX} = \beta \vu \vv^\T  + \sigma \mE,
\end{equation}
where $\vu$ and $\vv$ are unit vectors, $\beta$ is the signal strength,
$\sigma$ is the noise standard deviation, and
the noise matrix $\mE$ is assumed to be random with entries sampled
i.i.d. from the standard normal distribution. Our goal is to lower-bound the
correlation between $\vu$ and the top left singular vector $\hat{\vu}$
of $\tilde{\mX}$ for signal-to-noise ratio $\beta/\sigma\succsim
(mn)^{1/4}$ with high probability.

A direct application of the classic Wedin perturbation theorem
\cite{Wed72} to the rectangular matrix $\tilde{\mX}$ does
not provide us the desired result. This is because it requires the
signal to noise ratio $\beta/\sigma\geq 2\|\mE\|$. Since the spectral
norm of $\mE$ scales as $O_p(\sqrt{n} + \sqrt{m})$ \cite{Ver10}, this would
mean that we require $\beta/\sigma\succsim m^{1/2}$; i.e., the threshold is
dominated by the number of columns $m$, if $m\geq n$.

Alternatively, we can view $\hat{\vu}$ as the leading eigenvector of 
$\tilde{\mX}\tilde{\mX}^\T$, a square matrix. Our key insight is that
we can decompose $\tilde{\mX}\tilde{\mX}^\T$ as follows:
\begin{align*}
        \tilde{\mX}\tilde{\mX}^\T  &= (\beta^2 \vu \vu^\T +m\sigma^2\mI)  + (\sigma^2 \mE \mE^\T -m\sigma^2\mI)+
        \beta \sigma (\vu \vv^\T \mE^\T + \mE \vv \vu^\T).
\end{align*}
Note that $\vu$ is the leading eigenvector of the first term because
adding an identity matrix does not change the eigenvectors. Moreover, we
notice that there are two noise terms: the first term is a centered
Wishart matrix and it is independent of the signal $\beta$; the second
term is Gaussian distributed and depends on the signal $\beta$.

This implies a two-phase behavior corresponding to either 
the Wishart or the Gaussian noise term being dominant,
depending on the value of $\beta$. Interestingly, we get a different
speed of convergence for each of these phases as we show in the next
theorem (the proof is given in Appendix \ref{sec:proof-thm-leftsv}).

\begin{thm}
\label{thm:leftsv}
There exists a constant $C$ such that with probability at least $1-4e^{-n}$,
if $ m / n \geq C $,
 \[ \abs{\ip{\hat{\vu}, \vu }} \geq 
     \begin{dcases}
         1 - \frac{C nm }{(\beta/\sigma)^4}, & \text{if}\;\; 
            \sqrt{m} > \frac{\beta}{\sigma} \geq (Cnm)^{\frac{1}{4}}, \\
         1 - \frac{C n}{(\beta/\sigma)^2}, & \text{if}\;\;
            \frac{\beta}{\sigma} \geq \sqrt{m},\\
     \end{dcases}
\]
otherwise, $\abs{\ip{\hat{\vu}, \vu }} \geq 1 - \frac{C
n}{(\beta/\sigma)^2}$ if $\beta/\sigma \geq \sqrt{Cn}$.
\end{thm}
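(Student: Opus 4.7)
The plan is to work with the symmetric matrix $\tilde{\mX}\tilde{\mX}^\T$ and apply a Davis--Kahan $\sin\theta$ bound, exploiting exactly the decomposition displayed in the excerpt. The key observation is that $\vu$ is a top eigenvector of the shifted signal matrix $\beta^2\vu\vu^\T + m\sigma^2\mI$, whose spectrum is $\beta^2+m\sigma^2$ (simple, with eigenvector $\vu$) together with $m\sigma^2$ of multiplicity $n-1$. Hence the relevant eigengap is $\beta^2$, not $\beta^2+m\sigma^2$, which is precisely why adding $m\sigma^2\mI$ in the decomposition is essential. Write the perturbation as $\mDelta = \mW + \mG$, with $\mW := \sigma^2(\mE\mE^\T - m\mI)$ a centered Wishart piece independent of $\beta$, and $\mG := \beta\sigma(\vu\vv^\T\mE^\T + \mE\vv\vu^\T)$ a rank-at-most-two cross term that scales linearly in $\beta$.

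Next I would control each piece in spectral norm with high probability. For $\mW$, standard Gaussian-matrix concentration places the singular values of $\mE$ inside $[\sqrt{m}-\sqrt{n}-t,\sqrt{m}+\sqrt{n}+t]$, which gives $\|\mW\| \lesssim \sigma^2\sqrt{nm}$ with probability at least $1-2e^{-n}$ once $m\geq Cn$. For $\mG$, the vector $\mE\vv$ is standard Gaussian in $\R^n$ since $\vv$ is a fixed unit vector, so $\|\mE\vv\|\lesssim \sqrt{n}$ with probability $1-2e^{-n}$, and since $\mG$ has rank at most two, $\|\mG\| \leq 2\beta\sigma\|\mE\vv\| \lesssim \beta\sigma\sqrt{n}$.

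Combining these in a Davis--Kahan bound yields
\[
    \sin^2\theta(\hat{\vu},\vu) \;\lesssim\; \frac{\|\mW\|^2 + \|\mG\|^2}{\beta^4} \;\lesssim\; \frac{\sigma^4 nm}{\beta^4} + \frac{\sigma^2 n}{\beta^2},
\]
valid under the eigengap condition $\beta^2 \gtrsim \|\mW\|+\|\mG\|$, which is ensured by $\beta/\sigma \gtrsim (nm)^{1/4}$ (together with $m\geq Cn$). The two-phase behavior then falls out of comparing the two summands: the Wishart term dominates exactly when $\beta/\sigma < \sqrt{m}$, producing decay $nm/(\beta/\sigma)^4$, while the rank-two Gaussian term dominates when $\beta/\sigma \geq \sqrt{m}$, producing decay $n/(\beta/\sigma)^2$. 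Passing from $\sin^2\theta$ to $1-|\langle\hat{\vu},\vu\rangle|$ costs only a factor of two since $1-|\langle\hat{\vu},\vu\rangle|\leq 1-|\langle\hat{\vu},\vu\rangle|^2 = \sin^2\theta$. The trailing edge case $m\lesssim n$ is not covered by this decomposition, but there Wedin applied directly to $\tilde{\mX}$ immediately gives $\sin\theta \lesssim \sigma\|\mE\|/\beta \lesssim \sigma\sqrt{n}/\beta$, yielding the claimed $n/(\beta/\sigma)^2$ bound.

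The main obstacle is the bookkeeping around the eigengap. A clean application of Davis--Kahan needs $\|\mDelta\| \leq \beta^2/2$, so the constants in $\beta/\sigma \geq (Cnm)^{1/4}$ must be tuned large enough that both $\|\mW\|$ and $\|\mG\|$ are simultaneously dominated by $\beta^2$; in particular one must check that $\beta/\sigma\gtrsim(nm)^{1/4}$ together with $m\geq Cn$ also forces $\beta/\sigma\gtrsim\sqrt{n}$, so the $\mG$ piece is controlled. A secondary subtlety is that $\mW$ and $\mG$ share the same Gaussian $\mE$ and are therefore dependent, but since only spectral-norm upper bounds on each are needed, a union bound over the two deviation events (plus the $\|\mE\vv\|$ event) suffices to fit within the $4e^{-n}$ failure budget claimed in the theorem.
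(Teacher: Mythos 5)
Your proposal follows the paper's proof in its essential structure: the same shift-and-decompose trick on $\tilde{\mX}\tilde{\mX}^\T$ (absorbing $m\sigma^2\mI$ into the signal so the eigengap is $\beta^2$), the same deterministic $\sin\theta$ perturbation step under $\|\text{noise}\|\leq\beta^2/2$, and the same identification of the crossover at $\beta/\sigma=\sqrt{m}$ from comparing the Wishart and cross terms. Where you genuinely diverge is in the probabilistic lemma: the paper keeps the centered Wishart and the rank-two cross term bundled into a single matrix $\mG$ and runs an $\epsilon$-net argument on the quadratic form $\vx^\T\mG\vx$ (a $\chi^2_m$ deviation plus a Gaussian tail, then a union bound over a $1/4$-net), whereas you split the noise and invoke off-the-shelf concentration for each piece --- singular-value concentration of $\mE$ for $\|\sigma^2(\mE\mE^\T-m\mI)\|\lesssim\sigma^2\sqrt{mn}$, and the norm of the Gaussian vector $\mE\vv$ for the rank-two term. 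Your route is more modular and avoids redoing the net argument, at the cost of slightly worse constants; both fit the $1-4e^{-n}$ budget, and your observation that dependence between the two pieces is harmless (only upper bounds are union-bounded) is correct. You also handle the residual case $m\lesssim n$ by applying Wedin directly to the rectangular $\tilde{\mX}$ (legitimate, since $\beta/\sigma\geq\sqrt{Cn}\gtrsim\sqrt{m}+\sqrt{n}$ there), while the paper stays inside the second-moment framework for that case too; the conclusions agree. No gaps.
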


In other words, if $\tilde{\mX}$ has sufficiently many more columns than rows,
as the signal to noise ratio $\beta / \sigma$ increases,
$\hat{\vu}$ first converges to $\vu$ as $1/\beta^4$, and then as $1/\beta^2$.
Figure \ref{fig:leftsv} illustrates these results.
We randomly generate a rank-one $100 \times 10000$ matrix perturbed by Gaussian
noise, and measure the distance between $\hat{\vu}$ and $\vu$.
The phase transition happens at $\beta/\sigma = (nm)^{1/4}$, 
and there are two regimes of different convergence rates as Theorem \ref{thm:leftsv} predicts.

\begin{figure}[tb]
 \begin{center}
\subfigure[
Synthetic experiment showing phase transition at
  $\beta/\sigma = (nm)^{1/4}$ and regimes with different rates of convergence.
  See Theorem \ref{thm:leftsv}.
]{
  \includegraphics[width=.45\columnwidth]{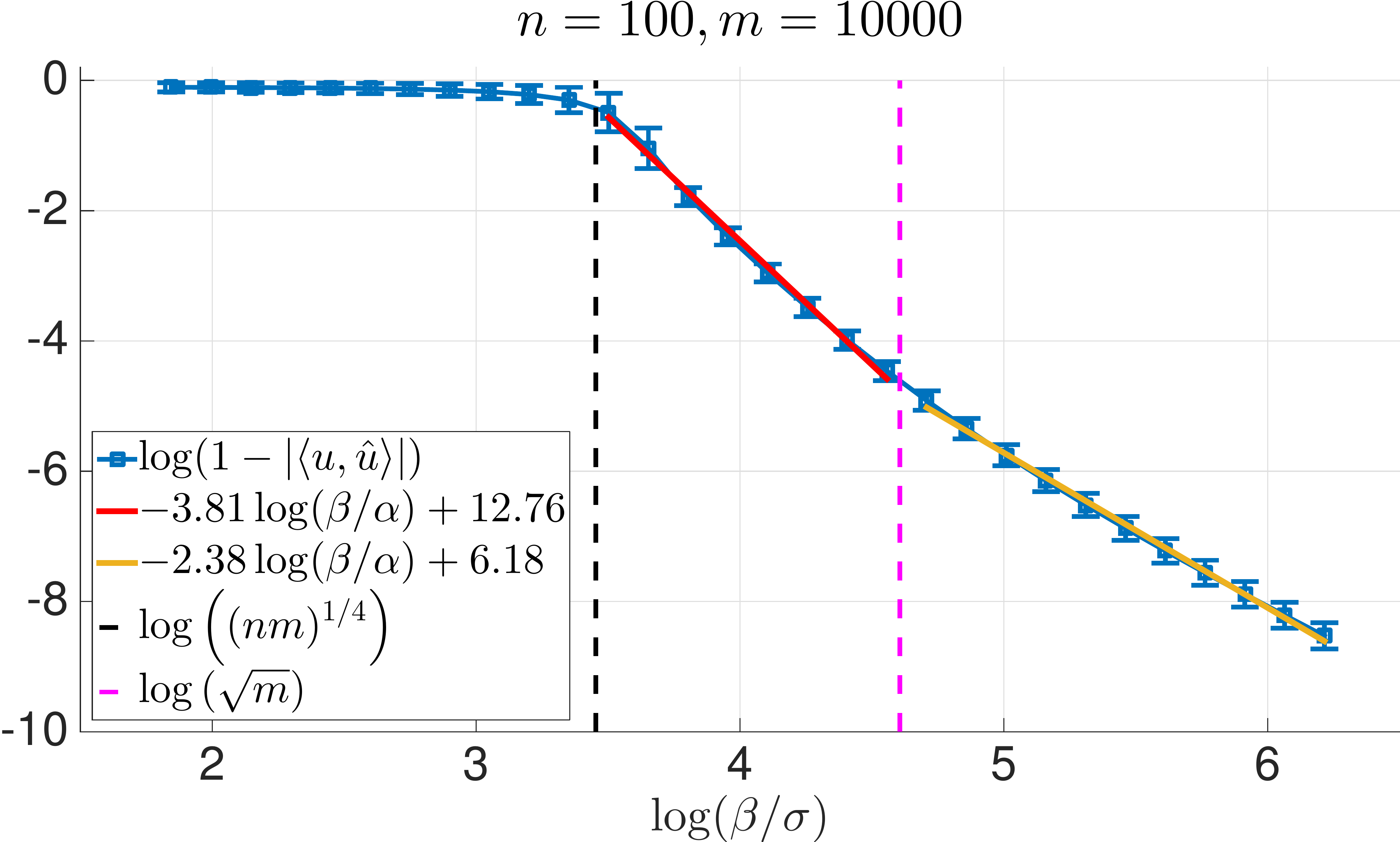}
  \label{fig:leftsv}
}\hspace*{5mm}\subfigure[
Synthetic experiment showing phase transition at
  $\beta=\sigma(\prod_{k}n_k)^{1/4}$ for odd order tensors. 
See Corollary \ref{coro:phasetransition}.
]{\includegraphics[width=.45\columnwidth]{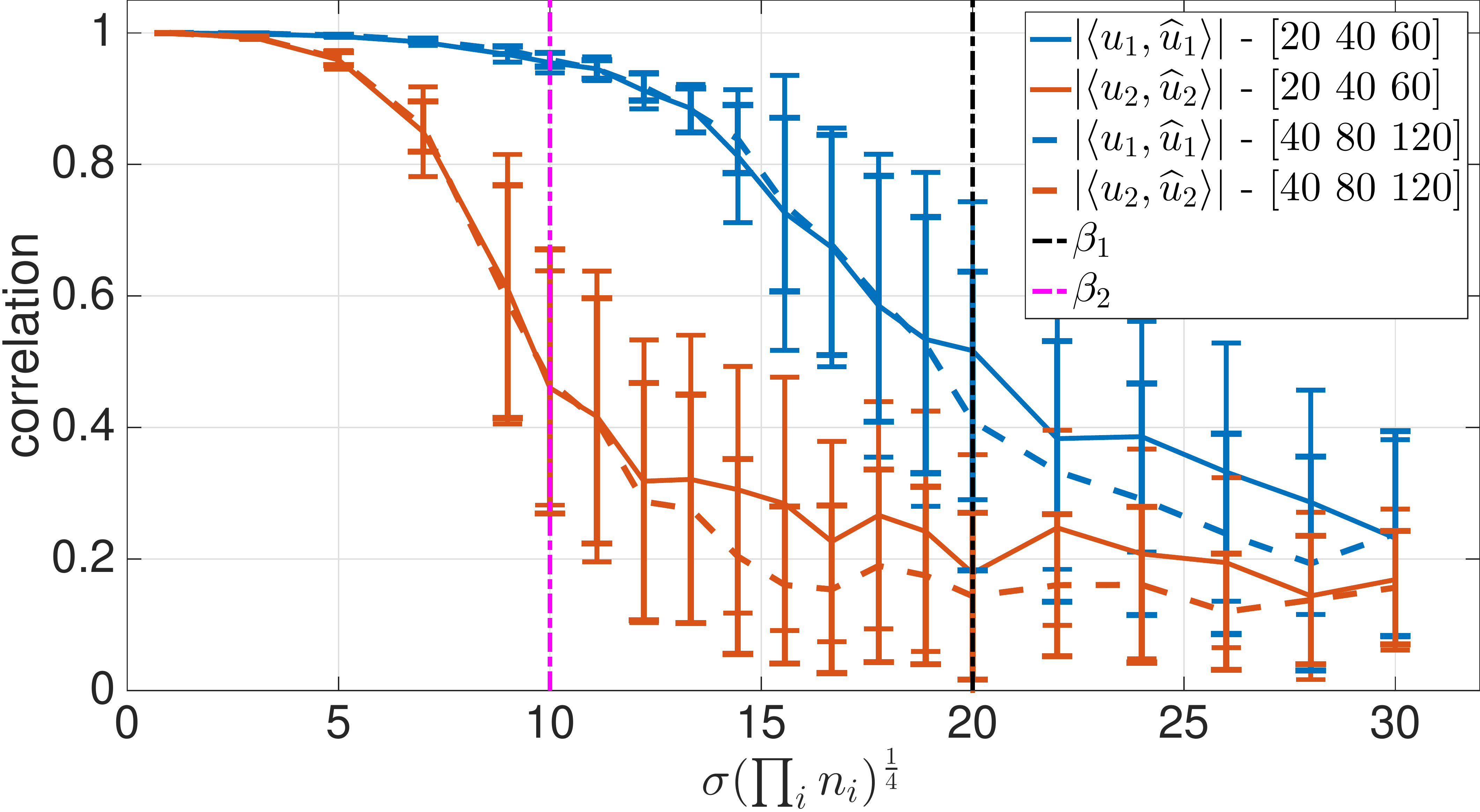}
  \label{fig:correlation}}
 \end{center}
\vspace*{-2mm}
 \caption{Numerical demonstration of Theorem \ref{thm:leftsv} and
 Corollary \ref{coro:phasetransition}.}
\end{figure}

\subsection{Tensor Unfolding}
Now let's apply the above result to the tensor version of information plus noise model
studied by \citet{RicMon14}. We consider a rank one $n\times 
\cdots \times n$ tensor (signal) contaminated by Gaussian noise as follows:
\begin{equation}
    \label{eq:asymm_rank1}
    \tY = \tX^{\ast} +\sigma\tE = \beta \vu^{(1)} \circ \cdots \circ \vu^{(K)} + \sigma\tE, 
\end{equation}
where factors $\vu^{(k)}\in\R^{n}$, $k=1,\ldots,K$, are unit vectors, which are not necessarily
identical, and the entries of $\tE\in\R^{n\times\cdots\times n}$ are i.i.d samples from the normal distribution
$\mathcal{N}(0,1)$. Note that this is slightly more general (and easier
to analyze) than the symmetric setting studied by \citet{RicMon14}.

%

Several estimators for recovering $\tX^*$ from its noisy
version $\tY$ have been proposed (see Table \ref{tab:comparison}). Both
the overlapped nuclear norm and latent nuclear norm discussed in \cite{TomSuz13}
achives the relative performance guarantee
\begin{align}
\label{eq:denoise-overlap}
 \norm{\hat{\tX}-\tX^{\ast}}_F/\beta\leq O_p\left(\sigma\sqrt{n^{K-1}}/\beta\right),
\end{align}
where $\hat{\tX}$ is the estimator. This bound
implies that if we want to
obtain relative error smaller than $\epsilon$, we need the signal to
noise ratio $\beta/\sigma$ to scale as $\beta/\sigma\succsim
\sqrt{n^{K-1}}/\epsilon$. 

Mu et al. \citet{MuHuaWriGol14} proposed the square norm, 
defined as the nuclear norm of the matrix obtained by grouping the first
$\floor{K/2}$ indices along the rows and the last $\ceil{K/2}$ indices
along the columns. This norm improves the right
hand side of inequality \eqref{eq:denoise-overlap} to
$O_p(\sigma\sqrt{n^{\ceil{K/2}}}/\beta)$, which translates to
requiring $\beta/\sigma\succsim \sqrt{n^{\lceil K/2\rceil}}/\epsilon$ for
obtaining relative error $\epsilon$. The intuition here is the more
square the unfolding is the better the bound becomes. However, there is
no improvement for $K=3$.

Richard and Montanari \citet{RicMon14} studied the (symmetric version of) model
\eqref{eq:asymm_rank1} and proved that a 
recursive unfolding algorithm achieves the 
factor recovery error
 ${\rm dist}(\hat{\vu}^{(k)},\vu^{(k)})=\epsilon$ with
$\beta/\sigma\succsim \sqrt{n^{\ceil{K/2}}}/\epsilon$ with high probability, where
 ${\rm dist}(\vu,\vu'):=\min(\|\vu-\vu'\|,\|\vu+\vu'\|)$.
They also showed that the randomly initialized tensor
power method \cite{DeLDeMVan00b,KolMay11,AnaGeHsuKakTel14}
can achieve the same error $\epsilon$ with slightly worse threshold
$\beta/\sigma\succsim \max(\sqrt{n}/\epsilon^2,n^{K/2})\sqrt{K\log K}$
also with high probability.

The reasoning underlying both \citet{MuHuaWriGol14} and \citet{RicMon14}
is that square unfolding is better. However, if we take the (ordinary) mode-$k$
unfolding 
\begin{align}
    \label{eq:rank1_tensor_unfold}
 \mY_{(k)}=\beta \vu^{(k)}\bigl(&\vu^{(k-1)} \otimes \cdots
\otimes \vu^{ (1)}\otimes \vu^{(K)}\otimes \cdots \otimes \vu^{(k+1)}\bigr)^\T + \sigma \mE_{(k)},
\end{align}
we can see  \eqref{eq:rank1_tensor_unfold} as 
an instance of information plus noise model \eqref{eq:infonoise} where
$m/n=n^{K-2}$. Thus the ordinary unfolding satisfies the condition of
Theorem \ref{thm:leftsv} for $n$ or $K$ large enough.

\begin{coro}
\label{coro:phasetransition}
Consider a $K(\geq 3)$th order rank one tensor contaminated by Gaussian noise as
 in \eqref{eq:asymm_rank1}. There exists a constant $C$ such that if $n^{K-2} \geq C$, with probability at least $1 - 4Ke^{-n}$, we have
\[  
 {\rm dist}^2(\hat{\vu}^{(k)},\vu^{(k)}) \leq 
    \begin{dcases}
        \frac{ 2C n^K }{(\beta/\sigma)^4}, & \text{if} \;\; n^{\frac{K-1}{2}} >
        \beta/\sigma \geq C^{\frac{1}{4}}n^\frac{K}{4},\\
        \frac{ 2C n }{(\beta/\sigma)^2}, & \text{if} \;\; \beta/\sigma
        \geq n^{\frac{K-1}{2}},
    \end{dcases}
\qquad\text{for $k=1,\ldots, K$,}
\]
where $\hat{\vu}^{(k)}$ is the leading left singular vector of the
 rectangular unfolding $\mY_{(k)}$.
\end{coro}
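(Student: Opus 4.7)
The plan is to reduce Corollary \ref{coro:phasetransition} to Theorem \ref{thm:leftsv} applied once per mode, then union bound. For each $k \in \{1,\ldots,K\}$, I would first verify that the mode-$k$ unfolding of the model \eqref{eq:asymm_rank1} exactly matches the information-plus-noise model \eqref{eq:infonoise}. Using the identity \eqref{eq:rank1_tensor_unfold}, we can write
\begin{equation*}
\mY_{(k)} = \beta\, \vu^{(k)} \vv_k^\T + \sigma \mE_{(k)},
\end{equation*}
where $\vv_k$ is the Kronecker product of the other factors. Since $\|\va\otimes\vb\| = \|\va\|\|\vb\|$ and each $\vu^{(j)}$ is a unit vector, $\vv_k$ is itself a unit vector in $\R^{n^{K-1}}$; moreover, $\mE_{(k)}$ is just a rearrangement of the i.i.d.\ standard Gaussian entries of $\tE$, so it has i.i.d.\ standard Gaussian entries as an $n \times n^{K-1}$ matrix.

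Next I would apply Theorem \ref{thm:leftsv} with the substitution $m = n^{K-1}$. The dimension hypothesis $m/n \geq C$ becomes $n^{K-2} \geq C$, which is exactly the assumption of the corollary. The two cases of Theorem \ref{thm:leftsv} translate directly: the boundary $\sqrt{m} = n^{(K-1)/2}$ and the lower threshold $(Cnm)^{1/4} = C^{1/4} n^{K/4}$ line up with the stated bounds, and $nm = n^K$ in the first branch while $n$ is unchanged in the second. Thus, with probability at least $1-4e^{-n}$,
\begin{equation*}
|\ip{\hat{\vu}^{(k)}, \vu^{(k)}}| \geq 1 - \delta_k, \qquad \delta_k = \begin{dcases} Cn^K/(\beta/\sigma)^4 & \text{in the first regime,} \\ Cn/(\beta/\sigma)^2 & \text{in the second regime.}\end{dcases}
\end{equation*}

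To pass from correlation to the symmetric distance, I would use the identity ${\rm dist}^2(\hat{\vu}^{(k)},\vu^{(k)}) = 2 - 2|\ip{\hat{\vu}^{(k)}, \vu^{(k)}}|$, valid for unit vectors, which yields ${\rm dist}^2(\hat{\vu}^{(k)},\vu^{(k)}) \leq 2\delta_k$, matching the claimed bounds with the factor $2C$. Finally I would take a union bound over $k=1,\ldots,K$, which gives total failure probability at most $4Ke^{-n}$.

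I do not expect a significant obstacle: the content is entirely in Theorem \ref{thm:leftsv}, and the corollary is a mechanical reparametrization. The only mild subtlety is ensuring that the unit-vector and i.i.d.\ Gaussian conditions really are preserved by unfolding and by taking Kronecker products of factors, but both are immediate. If anything, I would be careful to state explicitly that the distance on unit vectors used here coincides with the one implicitly applied by Theorem \ref{thm:leftsv} up to a sign, justifying the $\min$ in the definition of ${\rm dist}$.
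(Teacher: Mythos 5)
Your proposal is correct and is essentially the paper's own argument: the text immediately preceding Corollary \ref{coro:phasetransition} identifies the mode-$k$ unfolding \eqref{eq:rank1_tensor_unfold} as an instance of the information-plus-noise model \eqref{eq:infonoise} with $m=n^{K-1}$, applies Theorem \ref{thm:leftsv}, and the factor $2$ and the probability $1-4Ke^{-n}$ come from exactly the conversion ${\rm dist}^2 = 2 - 2|\ip{\hat{\vu},\vu}|$ and the union bound over the $K$ modes that you describe.
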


This proves that as conjectured by \citet{RicMon14}, the threshold
$\beta/\sigma \succsim n^{K/4}$ applies not only to the even order case but also
to the odd order case. Note that Hopkins et al. \cite{HopShiSte15} have
shown a similar result without the sharp rate of convergence.
The above corollary easily extends to more general $n_1\times\cdots\times
n_K$ tensor by replacing the conditions by $ \sqrt{\prod_{\ell\neq k}n_\ell} > \beta/\sigma\geq
(C\prod_{k=1}^{K}n_k)^{1/4}$ and $ \beta / \sigma \geq
\sqrt{\prod_{\ell\neq k}n_\ell}$. The result also holds when $\tX^\ast$ has rank
higher than 1; see Appendix \ref{sec:leftsv_rankR}.

We demonstrate this result in Figure \ref{fig:correlation}. 
The models behind the experiment are slightly more general ones in which
$[n_1,n_2,n_3]=[20,40,60]$ or $[40,80,120]$
and the signal $\tX^{\ast}$ is rank two with $\beta_1=20$ and
$\beta_2=10$. The plot shows the inner products
$\ip{\vu_1^{(1)},\hat{\vu}_1^{(1)}}$ and
$\ip{\vu_2^{(1)},\hat{\vu}_2^{(1)}}$ as a measure of the quality of
estimating the two mode-1 factors. The horizontal axis is the normalized
noise standard deviation $\sigma(\prod_{k=1}^{K}n_k)^{1/4}$. We can clearly see that
the inner product decays symmetrically around $\beta_1$ and
$\beta_2$ as predicted by Corollary \ref{coro:phasetransition} for both
tensors.

\section{Subspace norm for tensors}
\label{sec:lucky}

 Suppose the true tensor $\tX^\ast\in\R^{n\times \cdots\times n}$ admits
 a minimum Tucker decomposition \cite{Tuc66} of rank $(R,\ldots,R)$:
\begin{align}
\label{eq:rank-r-tucker}
 \tX^* = \textstyle\sum_{i_1=1}^R\cdots\textstyle\sum_{i_K=1}^{R} \beta_{i_1i_2\ldots i_K} \vu^{(1)}_{i_1} \circ \cdots \circ \vu^{(K)}_{i_K}.
\end{align}
If the core tensor $\tC=(\beta_{i_1\ldots i_K})\in\R^{R\times \cdots
\times R}$ is {\em superdiagonal},
the above decomposition reduces to the canonical polyadic (CP)
decomposition~\cite{Hit27,KolBad09}.
The mode-$k$ unfolding of the true tensor $\tX^\ast$ can be written as follows:
\begin{align}
\label{eq:mode-k-unfolding}
 \mX^*_{(k)}
 = \mU^{(k)} \mC_{(k)} \left( \mU^{(1)}  \otimes \cdots
 \otimes \mU^{(k-1)}\otimes \mU^{(k+1)}\otimes \cdots\otimes \mU^{(K)} \right)^\T,
\end{align}
where $\mC_{(k)}$ is the mode-$k$ unfolding of the core tensor $\tC$;
$\mU^{(k)}$ is a $n \times R$ matrix $\mU^{(k)}=[\vu^{(k)}_1, \ldots,
\vu^{(k)}_R]$ for $k=1,\ldots,K$. Note that $\mU^{(k)}$ is not
necessarily orthogonal.

Let $\mX^\ast_{(k)}=\mP^{(k)}\mLambda^{(k)}{\mQ^{(k)}}^\top$
be the SVD of $\mX^\ast_{(k)}$.
We will observe that
\begin{align}
\label{eq:span-by-kron}
\mQ^{(k)} \in \text{Span}\left(
\mP^{(1)}\otimes \cdots \otimes \mP^{(k-1)}\otimes
 \mP^{(k+1)}\otimes \cdots\otimes \mP^{(K)}
\right)
\end{align}
because of \eqref{eq:mode-k-unfolding} and $\mU^{(k)}\in\text{Span}(\mP^{(k)})$.

Corollary \ref{coro:phasetransition} shows that the left
singular vectors $\mP^{(k)}$ can be recovered
 under mild conditions; thus
the span of the right singular vectors can also be recovered.
 Inspired by this, we define a norm that models a tensor
$\tX$ as a mixture of tensors $\tZ^{(1)}, \ldots,
\tZ^{(K)}$. We require that the mode-$k$ unfolding of $\tZ^{(k)}$, i.e.
$\mZ^{(k)}_{(k)}$, has a low rank factorization 
$ \mZ^{(k)}_{(k)} = \mM^{(k)} {\mS^{(k)}}^\T,$
where $\mM^{(k)}\in\R^{n\times H^{K-1}}$ is a variable, and $\mS^{(k)}\in\R^{n^{K-1}\times
 H^{K-1}}$ is a fixed arbitrary orthonormal basis of some
 subspace, which we choose later to have the Kronecker structure in
 \eqref{eq:span-by-kron}.

In the following, we define the subspace norm, suggest an approach to construct
the right factor $\mS^{(k)}$, and prove the denoising bound in the end.

\subsection{The subspace norm}
Consider a $K$th order tensor of size $n\times \cdots n$.

\begin{defn}
Let $\mS^{(1)}, \ldots, \mS^{(K)}$ be matrices such that $\mS^{(k)} \in
\R^{n^{K-1}\times H^{K-1}}$ with $H \leq n$.
The subspace norm for a $K$th order tensor $\tX$ associated with
$\{ \mS^{(k)} \}_{k=1}^K$ 
is defined as
\[
    \norm{\tX}_\lucky := \begin{cases} 
    \inf_{ \{\mM^{(k)}\}^K_{k=1} }
    \sum_{k=1}^K \| \mM^{(k)} \|_*,  &\text{if} \; \tX \in \text{Span}(\{\mS^{(k)}\}^K_{k=1}), \\
    +\infty, & \text{otherwise},
    \end{cases}
\]
where $\|\cdot\|_\ast$ is the nuclear norm, and $\text{Span}(
 \{\mS^{(k)}\}^K_{k=1}) := \big\{ \tX\in\R^{n\times \cdots\times n}: \exists \mM^{(1)}, \ldots, \mM^{(K)},  \tX = \sum_{k=1}^K \fold_k(\mM^{(k)}{\mS^{(k)}}^\T)  \big \}$.
\end{defn}

In the next lemma (proven in Appendix \ref{sec:proof-dualnorm}), we show
the dual norm of the subspace norm has a simple appealing
form. As we see in Theorem~\ref{thm:lucky}, it avoids the $O(\sqrt{n^{K-1}})$
scaling (see the first column of Table~\ref{tab:comparison}) by restricting the influence
of the noise term in the subspace defined by $\mS^{(1)},\ldots,\mS^{(K)}$.
\begin{lemma}
\label{lem:dualnorm}
 The dual norm of $\norm{\cdot}_\lucky$ is a semi-norm 
\begin{align*}
\norm{\tX}_{\dual} = \max_{k=1,\ldots,K} \| \mX_{(k)} \mS^{(k)}\|,
\end{align*}
where $\|\cdot\|$ is the spectral norm.
\end{lemma}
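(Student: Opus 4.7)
The plan is to unpack the definition of the dual norm directly and exploit the bilinear structure of the inner product after unfolding. By definition, $\norm{\tY}_{\dual} = \sup_{\norm{\tX}_\lucky \leq 1} \ip{\tY,\tX}$. Because $\norm{\tX}_\lucky = +\infty$ unless $\tX$ lies in $\text{Span}(\{\mS^{(k)}\}_{k=1}^K)$, the supremum may be restricted to such $\tX$, in which case $\tX$ admits some representation $\tX=\sum_k\fold_k(\mM^{(k)}{\mS^{(k)}}^\T)$. The first key observation I would make is that the inner product $\ip{\tY,\tX}$ depends only on $\tX$, not on the choice of decomposition; therefore the constraint $\norm{\tX}_\lucky\leq 1$, which by the infimum amounts to the existence of a decomposition with $\sum_k\|\mM^{(k)}\|_\ast \leq 1$, can be replaced by a direct supremum over all factor tuples $\{\mM^{(k)}\}$ satisfying $\sum_k\|\mM^{(k)}\|_\ast\leq 1$.

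Next I would rewrite the inner product in terms of the factors. Using the identity $\ip{\tY,\fold_k(\mA)} = \ip{\mY_{(k)},\mA}$ and the trace identity $\ip{\mB,\mM\mS^\T} = \ip{\mB\mS,\mM}$, one gets
\begin{align*}
 \ip{\tY,\tX} = \sum_{k=1}^K \ip{\mY_{(k)}, \mM^{(k)}{\mS^{(k)}}^\T} = \sum_{k=1}^K \ip{\mY_{(k)}\mS^{(k)},\mM^{(k)}}.
\end{align*}
At this point the problem decouples across $k$ modulo the single coupling constraint $\sum_k\|\mM^{(k)}\|_\ast\leq 1$. I would then introduce weights $t_k = \|\mM^{(k)}\|_\ast$ and write $\mM^{(k)} = t_k \mN^{(k)}$ with $\|\mN^{(k)}\|_\ast\leq 1$, which separates the optimization into an inner maximization over each $\mN^{(k)}$ and an outer maximization over the simplex $\{t_k\geq 0, \sum_k t_k\leq 1\}$.

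The inner maximization is resolved by the classical matrix duality $\sup_{\|\mN\|_\ast\leq 1}\ip{\mA,\mN} = \|\mA\|$, yielding $\sum_k t_k\|\mY_{(k)}\mS^{(k)}\|$. The outer maximization of a linear function over the simplex is attained at a vertex, giving $\max_k\|\mY_{(k)}\mS^{(k)}\|$, which is the claimed formula. Finally, I would note that what we have computed is manifestly a semi-norm (not a norm in general) because any $\tY$ whose every unfolding satisfies $\mY_{(k)}\mS^{(k)}=0$ has $\norm{\tY}_{\dual}=0$, which is consistent with $\norm{\cdot}_\lucky$ being finite only on a subspace.

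I do not anticipate a serious obstacle: the only subtle step is the collapse from ``infimum over decompositions of $\tX$'' in the primal to ``supremum over all decompositions'' in the dual computation, which must be justified by the invariance of $\ip{\tY,\tX}$ under the choice of decomposition. The rest is bookkeeping and standard matrix norm duality.
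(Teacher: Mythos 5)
Your proposal is correct and follows essentially the same route as the paper's proof: rewrite the dual norm as a supremum over factor tuples $\{\mM^{(k)}\}$ with $\sum_k\|\mM^{(k)}\|_\ast\leq 1$, pass to $\sum_k\ip{\mY_{(k)}\mS^{(k)},\mM^{(k)}}$ via the unfolding and trace identities, and conclude by nuclear--spectral duality (the paper compresses your simplex-plus-inner-maximization step into a single appeal to H\"older's inequality). Your explicit justification that the supremum over tensors with $\norm{\tX}_\lucky\leq 1$ may be replaced by the supremum over all decompositions is a detail the paper leaves implicit, and it is handled correctly.
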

\subsection{Choosing the subspace}
A natural question that arises is how to choose the matrices $\mS^{(1)}, \ldots,
\mS^{(k)}$.

\begin{lemma}
\label{lemma:span}
Let the $\mX^*_{(k)} = \mP^{(k)} \mLambda^{(k)} \mQ^{(k)}$ be the SVD of
$\mX^*_{(k)}$, where $\mP^{(k)}$ is $n \times R$ and $\mQ^{(k)}$ is
$n^{K-1} \times R$.
Assume that $R \leq n$ and $\mU^{(k)}$ has full column rank. It holds that for all $k$, 
\begin{enumerate}[i)]
\item $\mU^{(k)} \in \text{Span}(\mP^{(k)})$,
\item $\mQ^{(k)} \in 
\text{Span}\left(
\mP^{(1)}\otimes \cdots \otimes \mP^{(k-1)}\otimes
 \mP^{(k+1)}\otimes \cdots\otimes \mP^{(K)}
\right)$.
\end{enumerate}
\end{lemma}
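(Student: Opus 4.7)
The plan is to prove both parts by tracking column and row spaces through the Tucker factorization \eqref{eq:mode-k-unfolding} and using standard properties of Kronecker products. Throughout, I will use that for any matrix $\mA$, $\text{Span}(\mA)$ denotes its column space, and that the SVD $\mX^*_{(k)} = \mP^{(k)} \mLambda^{(k)} {\mQ^{(k)}}^\T$ (with $R$ columns on each factor) gives $\text{Span}(\mP^{(k)})$ equal to the column space of $\mX^*_{(k)}$ and $\text{Span}(\mQ^{(k)})$ equal to the row space of $\mX^*_{(k)}$.

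For part (i), I would start from \eqref{eq:mode-k-unfolding}, which factors $\mX^*_{(k)}$ as $\mU^{(k)}$ times a matrix on the right. This immediately gives the inclusion $\text{Span}(\mX^*_{(k)}) \subseteq \text{Span}(\mU^{(k)})$, hence $\text{Span}(\mP^{(k)}) \subseteq \text{Span}(\mU^{(k)})$. To conclude equality (and hence $\mU^{(k)} \in \text{Span}(\mP^{(k)})$), I need the reverse dimension bound: because the Tucker decomposition in \eqref{eq:rank-r-tucker} is minimal of multilinear rank $(R,\ldots,R)$, the mode-$k$ rank of $\tX^\ast$ equals $R$, so $\text{rank}(\mX^*_{(k)}) = R$. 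Since $\mU^{(k)}$ also has full column rank $R$ by hypothesis, the two column spaces have the same dimension and must coincide.

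For part (ii), I take the transpose point of view and look at the row space. From \eqref{eq:mode-k-unfolding},
\[
\text{Span}(\mQ^{(k)}) = \text{row-span}(\mX^*_{(k)}) \subseteq \text{Span}\bigl(\mU^{(1)} \otimes \cdots \otimes \mU^{(k-1)} \otimes \mU^{(k+1)} \otimes \cdots \otimes \mU^{(K)}\bigr),
\]
using only that the other factor $\mU^{(k)} \mC_{(k)}$ sits on the left. Now applying part (i) at every mode $j \neq k$ gives $\text{Span}(\mU^{(j)}) \subseteq \text{Span}(\mP^{(j)})$, so I can write each $\mU^{(j)} = \mP^{(j)} \mA^{(j)}$ for some $R\times R$ matrix $\mA^{(j)}$. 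The mixed-product property of the Kronecker product, $(\mP^{(1)}\mA^{(1)})\otimes \cdots \otimes (\mP^{(K)}\mA^{(K)}) = (\mP^{(1)}\otimes \cdots \otimes \mP^{(K)})(\mA^{(1)}\otimes \cdots \otimes \mA^{(K)})$ (omitting mode $k$), lets me factor the Kronecker product of $\mU^{(j)}$'s through the Kronecker product of $\mP^{(j)}$'s, giving the desired inclusion.

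There is no real technical obstacle here; the argument is a bookkeeping exercise in linear algebra. The only subtle step is making sure the minimality of the Tucker rank is used to get equality (not just inclusion) in part (i), since inclusion alone would not let me replace $\mU^{(j)}$ by $\mP^{(j)}$ in part (ii) without shrinking the span. Once part (i) is established with equality, part (ii) follows cleanly from the mixed-product identity for Kronecker products.
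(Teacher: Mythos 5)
Your proposal is correct and follows essentially the same route as the paper: both use the factorization \eqref{eq:mode-k-unfolding} together with the minimality of the Tucker decomposition to identify $\text{Span}(\mP^{(k)})$ with $\text{Span}(\mU^{(k)})$, and then transpose and apply part (i) mode-wise (via the Kronecker mixed-product property) to get part (ii). The only cosmetic difference is that you obtain the equality of column spaces in part (i) by a dimension count, whereas the paper exhibits it explicitly by right-multiplying with Moore--Penrose pseudo-inverses of the full-row-rank factors.
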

\begin{proof}
We prove the lemma in Appendix \ref{sec:proof-lemma-span}.
\end{proof}

Corollary \ref{coro:phasetransition} shows that when the signal to
noise ratio is high enough, we can recover $\mP^{(k)}$  with high
probability. Hence we suggest the following three-step
approach for tensor denoising:
\begin{enumerate}[(i)]
   \item For each $k$, unfold the observation tensor in mode $k$ and compute the top
   $H$ left singular vectors. Concatenate these vectors to
   obtain a $n \times H$ matrix $\wh{\mP}^{(k)}$.

   \item Construct $\mS^{(k)}$ as $\mS^{(k)}=\wh{\mP}^{(1)} \otimes
	 \cdots \otimes \wh{\mP}^{(k-1)} \otimes \wh{\mP}^{(k+1)}\otimes \cdots \otimes
     \wh{\mP}^{(K)}$.
   \item Solve the subspace norm regularized minimization problem
   \begin{equation}
   \label{eq:lucky}
   \min_{\tX}\quad \frac{1}{2} \norm{\tY - \tX}^2_F + \lambda \norm{\tX}_\lucky,
   \end{equation}
   where the subspace norm is associated with the above defined $\{\mS^{(k)}\}^K_{k=1}$.
\end{enumerate}
See Appendix \ref{sec:optimization} for details.

\subsection{Analysis}
Let $\tY\in\R^{n\times\cdots\times n}$ be a tensor
corrupted by Gaussian noise with standard deviation $\sigma$
as follows:
\begin{align}
\label{eq:noisemodel}
 \tY=\tX^{\ast}+\sigma\tE.
\end{align}
We define a slightly modified estimator $\hat{\tX}$ as follows:
\begin{align}
\label{eq:estimator}
 \hat{\tX}=\argmin_{\tX,\{\mM^{(k)}\}_{k=1}^{K}}\!\!\!\Bigl\{
&\frac{1}{2}\norm{\tY-\tX}_F^2+\lambda\norm{\tX}_\lucky:\;
\tX=\sum_{k=1}^{K}\fold_k\left(\mM^{(k)}{\mS^{(k)}}^\T\right),\{\mM^{(k)}\}_{k=1}^{K}\in\mathcal{M}(\rho)\Bigr\}
\end{align}
where $\mathcal{M}(\rho)$ is a restriction of the set of matrices
$\mM^{(k)}\in\R^{n \times H^{K-1}}$, $k=1,\ldots,K$ defined as follows:
\begin{align*}
 \mathcal{M}(\rho):=\Bigl\{
&\{\mM^{(k)}\}_{k=1}^{K}:
\|\fold_k(\mM^{(k)})_{(\ell)}\|\leq
 \frac{\rho}{K}(\sqrt{n}+\sqrt{H^{K-1}}),\forall k\neq \ell
\Bigr\}.
\end{align*}
This restriction makes sure
that $\mM^{(k)}$, $k=1,\ldots,K$, are incoherent, i.e., each $\mM^{(k)}$
has a spectral norm that is as low
as a random matrix when unfolded at a different mode $\ell$. Similar
assumptions were used in low-rank plus sparse matrix decomposition
\cite{AgaNegWai12,HsuKakZha11} and for the denoising bound for the
latent nuclear norm \cite{TomSuz13}.

Then we have the following statement (we prove this in Appendix \ref{sec:proof-thm-lucky}).

\begin{thm}
\label{thm:lucky}
 Let $\tX_p$ be any tensor that can be expressed as
\begin{align*}
 \tX_p = \sum_{k=1}^{K}\fold_k\left(\mM_p^{(k)}{\mS^{(k)}}^\T\right),
\end{align*}
which satisfies the above incoherence condition $\{\mM_p^{(k)}\}_{k=1}^{K}\in
 \mathcal{M}(\rho)$ and let $r_k$ be the rank of $\mM_p^{(k)}$ for
 $k=1,\ldots,K$. In addition, we assume that each $\mS^{(k)}$ is
 constructed as $\mS^{(k)}=\wh{\mP}^{(k-1)}\otimes \cdots\otimes
 \wh{\mP}^{(k+1)}$ with $(\wh{\mP}^{(k)})^\T\wh{\mP}^{(k)}=\mI_{H}$. 
Then there are universal constants $c_0$ and $c_1$
 such that any solution $\hat{\tX}$ of the minimization problem
 \eqref{eq:estimator} with
 $\lambda=\norm{\tX_p-\tX^{\ast}}_{\dual}+ c_0\sigma\left(\sqrt{n}+\sqrt{H^{K-1}}+\sqrt{2\log(K/\delta})\right)$
 satisfies the following bound
\begin{align*}
 \norm{\hat{\tX}-\tX^{\ast}}_F
\leq  \norm{\tX_p-\tX^{\ast}}_F+c_1\lambda\sqrt{\sum\nolimits_{k=1}^{K}r_k},
\end{align*}
with probability at least $1-\delta$.
\end{thm}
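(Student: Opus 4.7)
The plan is to follow the standard regularized $M$-estimation paradigm that yields a basic inequality from optimality and then peels off the noise contribution using the dual norm characterization in Lemma~\ref{lem:dualnorm}. First I would use that $(\hat{\tX},\{\hat{\mM}^{(k)}\}_k)$ minimizes the objective in \eqref{eq:estimator}, while $(\tX_p,\{\mM_p^{(k)}\}_k)$ is feasible by the hypotheses on $\tX_p$, to write
\begin{align*}
\tfrac{1}{2}\norm{\tY-\hat{\tX}}_F^2+\lambda\sum_{k=1}^K\|\hat{\mM}^{(k)}\|_*\;\leq\;\tfrac{1}{2}\norm{\tY-\tX_p}_F^2+\lambda\sum_{k=1}^K\|\mM_p^{(k)}\|_*.
\end{align*}
Setting $\tD:=\hat{\tX}-\tX_p$ and $\Delta_k:=\hat{\mM}^{(k)}-\mM_p^{(k)}$ and substituting $\tY=\tX^{\ast}+\sigma\tE$ rearranges this into
\begin{align*}
\tfrac{1}{2}\norm{\tD}_F^2\;\leq\;\sigma\ip{\tE,\tD}+\ip{\tX^{\ast}-\tX_p,\tD}+\lambda\sum_{k=1}^K\bigl(\|\mM_p^{(k)}\|_*-\|\hat{\mM}^{(k)}\|_*\bigr).
\end{align*}

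Next, since $\tD=\sum_k\fold_k(\Delta_k{\mS^{(k)}}^\T)$, the two inner-product terms split mode-wise as $\sum_k\ip{\mE_{(k)}\mS^{(k)},\Delta_k}$ and $\sum_k\ip{(\mX^{\ast}_{(k)}-\mX_{p,(k)})\mS^{(k)},\Delta_k}$. Applying trace/spectral Hölder together with Lemma~\ref{lem:dualnorm} bounds their sum by $\bigl(\sigma\max_k\|\mE_{(k)}\mS^{(k)}\|+\|\tX^{\ast}-\tX_p\|_{\dual}\bigr)\sum_k\|\Delta_k\|_*$. The regularizer $\lambda$ is chosen precisely to dominate this effective noise on a high-probability event: because each $\mS^{(k)}$ has orthonormal columns, $\mE_{(k)}\mS^{(k)}$ behaves like an $n\times H^{K-1}$ standard Gaussian matrix whose operator norm concentrates around $\sqrt{n}+\sqrt{H^{K-1}}$, and a union bound over $k=1,\ldots,K$ yields the $\sqrt{2\log(K/\delta)}$ tail correction appearing in the statement of $\lambda$.

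Then I invoke the standard decomposability argument for the nuclear norm. Writing $\Delta_k=\Delta_k^{\parallel}+\Delta_k^{\perp}$ with respect to the rank-$2r_k$ oracle subspace attached to $\mM_p^{(k)}$, decomposability gives $\|\mM_p^{(k)}+\Delta_k\|_*\geq\|\mM_p^{(k)}\|_*+\|\Delta_k^{\perp}\|_*-\|\Delta_k^{\parallel}\|_*$, hence $\sum_k(\|\mM_p^{(k)}\|_*-\|\hat{\mM}^{(k)}\|_*)\leq\sum_k(\|\Delta_k^{\parallel}\|_*-\|\Delta_k^{\perp}\|_*)$. Combining this with the noise bound, using $\lambda\geq\|\tX^{\ast}-\tX_p\|_{\dual}+\sigma\max_k\|\mE_{(k)}\mS^{(k)}\|$, and discarding the favorable $\|\Delta_k^{\perp}\|_*$ contributions leaves a master inequality of the shape $\tfrac{1}{2}\|\tD\|_F^2\lesssim\lambda\sum_k\|\Delta_k^{\parallel}\|_*\leq\lambda\sqrt{2}\sum_k\sqrt{r_k}\,\|\Delta_k\|_F$.

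The last step is to convert $\sum_k\sqrt{r_k}\|\Delta_k\|_F$ into a quantity controlled by $\|\tD\|_F$, which is exactly the role of the incoherence restriction $\{\mM^{(k)}\}\in\mathcal{M}(\rho)$: it forces the $K$ mode-wise components to be nearly orthogonal in Frobenius sense, so that $\sum_k\|\Delta_k\|_F^2\lesssim\|\tD\|_F^2$; Cauchy--Schwarz then gives $\sum_k\sqrt{r_k}\|\Delta_k\|_F\leq\sqrt{\sum_k r_k}\cdot\bigl(\sum_k\|\Delta_k\|_F^2\bigr)^{1/2}$, which combines with the master inequality to yield $\|\tD\|_F\lesssim\lambda\sqrt{\sum_k r_k}$. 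The triangle inequality $\|\hat{\tX}-\tX^{\ast}\|_F\leq\|\tD\|_F+\|\tX_p-\tX^{\ast}\|_F$ finishes the argument. I expect the two main obstacles to be (i) justifying the Gaussian concentration bound for $\mE_{(k)}\mS^{(k)}$ even though $\mS^{(k)}=\wh{\mP}^{(k-1)}\otimes\cdots\otimes\wh{\mP}^{(k+1)}$ is constructed from the observed tensor and therefore correlated with $\tE$, and (ii) turning the spectral-norm incoherence hypothesis into the Frobenius-level decoupling of mode-wise pieces of $\tD$ needed in the final step; both issues have close analogues in the latent trace norm analysis of \citet{TomSuz13}, which should guide the technical details.
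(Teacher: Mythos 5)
Your overall architecture is the same as the paper's: basic inequality from optimality of $\hat{\tX}$ and feasibility of $\tX_p$, H\"older with the dual norm of Lemma~\ref{lem:dualnorm}, nuclear-norm decomposability on the rank-$2r_k$ subspace attached to $\mM_p^{(k)}$, Gaussian concentration of $\max_k\|\mE_{(k)}\mS^{(k)}\|$ with a union bound, and a final triangle inequality. The one step that would not go through as you sketched it is the decoupling step. You assert that the incoherence restriction $\mathcal{M}(\rho)$ yields a multiplicative comparison $\sum_k\|\mDelta_k\|_F^2\lesssim\norm{\hat{\tX}-\tX_p}_F^2$. It does not: expanding $\norm{\sum_k\fold_k(\mDelta_k{\mS^{(k)}}^\T)}_F^2$ produces cross terms $\ip{\fold_k(\mDelta_k{\mS^{(k)}}^\T),\fold_\ell(\mDelta_\ell{\mS^{(\ell)}}^\T)}$, and the spectral-norm incoherence only controls each of these via a nuclear--spectral H\"older pairing, $\abs{\ip{\cdot,\cdot}}\leq\|\mDelta_k\|_{\ast}\cdot\|(\fold_\ell(\mDelta_\ell))_{(k)}\|$. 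This is exactly the paper's Lemma~\ref{lem:coherence}, and what it delivers is the \emph{additive} bound
\begin{align*}
\frac{1}{2}\sum_{k=1}^{K}\|\mDelta_k\|_F^2\leq\frac{1}{2}\norm{\hat{\tX}-\tX_p}_F^2+\rho\max_k\bigl(\sqrt{n_k}+\sqrt{H^{K-1}}\bigr)\sum_{k=1}^{K}\|\mDelta_k\|_{\ast},
\end{align*}
not a constant-factor one. The residual nuclear-norm term must then be absorbed back into the regularization parameter, i.e.\ one needs $\lambda\geq\sigma\norm{\tE}_{\dual}+\norm{\tX^{\ast}-\tX_p}_{\dual}+\rho\max_k(\sqrt{n_k}+\sqrt{H^{K-1}})$; this is a second, independent reason the $\sqrt{n}+\sqrt{H^{K-1}}$ scale appears in $\lambda$, beyond the operator norm of $\mE_{(k)}\mS^{(k)}$ that you cite. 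The paper also runs a case split --- if $\sum_k\|\mDelta_k\|_F^2\leq\norm{\hat{\tX}-\tX_p}_F^2$ one divides directly, and only otherwise invokes the lemma and divides by $\sqrt{\sum_k\|\mDelta_k\|_F^2}$. Without this additive-plus-absorption mechanism, your final Cauchy--Schwarz and division step has no justification, so as written this is a genuine gap rather than a routine detail.

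On your obstacle (i), you are right to flag that $\mS^{(k)}$ is built from the observed tensor and is therefore correlated with $\tE$; the paper's proof simply treats $\mE_{(k)}\mS^{(k)}$ as an $n\times H^{K-1}$ matrix with i.i.d.\ standard Gaussian entries, so this subtlety is left unaddressed there as well --- it is a gap you share with the authors rather than one you introduced.
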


Note that the right-hand side of the bound consists of two terms. The
first term is the approximation error. This term will be zero if
$\tX^{\ast}$ lies in $\text{Span}( \{\mS^{(k)}\}^K_{k=1})$. This is the case, if we
choose $\mS^{(k)}=\mI_{n^{K-1}}$ as in the latent nuclear norm, or if
the condition of Corollary \ref{coro:phasetransition} is satisfied for
the smallest $\beta_R$ when we use the Kronecker product
construction we proposed. Note that the
regularization constant $\lambda$ should also scale with the dual
subspace norm of the residual $\tX_p-\tX^{\ast}$.

The second term is the estimation error with respect to $\tX_p$. If we
take $\tX_p$ to be the orthogonal projection of $\tX^\ast$ to the 
$\text{Span}( \{\mS^{(k)}\}^K_{k=1})$, we can ignore the contribution of
the residual to $\lambda$, because
$(\tX_p-\tX^{\ast})_{(k)}\mS^{(k)}=0$. Then the estimation error
scales mildly with the dimensions $n$, $H^{K-1}$ and with the sum of the ranks.
Note that if we take $\mS^{(k)}=\mI_{n^{K-1}}$, we have $H^{K-1}=n^{K-1}$, and we
recover the guarantee \eqref{eq:denoise-overlap} .

\section{Experiments}
\label{sec:exp}
In this section, we conduct tensor denoising experiments on synthetic
and real datasets, to numerically confirm our analysis in previous sections. 

\subsection{Synthetic data}
We randomly generated the true rank two tensor $\tX^*$ of size $20 \times 30
\times 40$ with singular values $\beta_1 = 20$ and $\beta_2 = 10$. The
true factors are generated as random matrices with orthonormal columns. 
The observation tensor $\tY$ is then generated by adding Gaussian noise
with standard deviation $\sigma$ to $\tX^*$.

Our approach is compared to the CP decomposition, the overlapped approach, and
the latent approach. The CP decomposition is computed by the tensorlab \cite{tensorlab} with 20
random initializations. We assume CP knows the true rank is 2. For the
subspace norm, we use Algorithm \ref{alg:admm} described in Section
\ref{sec:lucky}. We also select the top 2 singular
vectors when constructing $\wh{\mU}^{(k)}$'s. We computed the solutions for
20 values of regularization parameter $\lambda$ logarithmically spaced between 1
and 100. For the overlapped and the latent norm, we use ADMM described in
\cite{TomSuzHayKas11}; we also computed 20 solutions with the same $\lambda$'s used for
the subspace norm.

We measure the performance in the relative error defined as $\norm{\wh{\tX} -
\tX^*}_F/\norm{\tX^*}_F$. We report the minimum error obtained by choosing
the optimal regularization parameter or the optimal initialization.
Although the regularization parameter could be selected by leaving out some entries
and measuring the error on these entries, we will not go into tensor
completion here for the sake of simplicity.

Figure \ref{fig:exp} (a) and (b) show the result of this experiment. The left panel shows
the relative error for 3 representative values of $\lambda$ for the
subspace norm. 
The black dash-dotted line shows the minimum error across all the $\lambda$'s.
The magenta dashed line shows the error corresponding to the theoretically motivated choice 
$\lambda=\sigma(\max_k(\sqrt{n_k}+\sqrt{H^{K-1}})+\sqrt{2\log(K)})$ for each $\sigma$.
The two vertical lines are thresholds of $\sigma$ from Corollary
\ref{coro:phasetransition} corresponding to $\beta_1$ and $\beta_2$, namely, $\beta_1/(\prod_k n_k)^{1/4}$ and $\beta_2/(\prod_k
n_k)^{1/4}$. It confirms that there is a rather sharp increase in the
error around the theoretically predicted places (see also Figure \ref{fig:correlation}). We can also see
that the optimal $\lambda$ should grow linearly with $\sigma$. 
For large $\sigma$ (small SNR), the best relative
error is $1$ since the optimal choice of the regularization parameter
$\lambda$ leads to predicting with $\wh{\tX}=0$.

Figure \ref{fig:exp} (b) compares the performance of the subspace norm to other
approaches. For each method the smallest error corresponding to the
optimal choice of the regularization parameter $\lambda$ is shown.
In addition, to place the numbers in context, we plot the line corresponding to
\begin{align}
\label{eq:optimistic}
\text{Relative error} =  
    \frac{\sqrt{R \sum_k n_k \log(K)}}{\norm{\tX^{\ast}}_F}\cdot\sigma,
\end{align}
which we call ``optimistic''. This can be motivated from considering the
(non-tractable) maximum likelihood estimator for CP decomposition (see  Appendix \ref{sec:MLE}).

Clearly, the error of CP, the subspace norm, and ``optimistic'' grows at the same rate, much
slower than overlap and latent. The error of CP increases beyond $1$, as no
regularization is imposed (see Appendix \ref{sec:moreexp} for more experiments).
%
%
%
We can see that 
both CP and the subspace norm are behaving near optimally in this setting,
although such behavior is guaranteed for the subspace norm whereas it is hard
to give any such guarantee for the CP decomposition based on nonlinear optimization.

\begin{figure}[t]
\begin{center}
\subfigure[]{\includegraphics[width=.3\columnwidth]{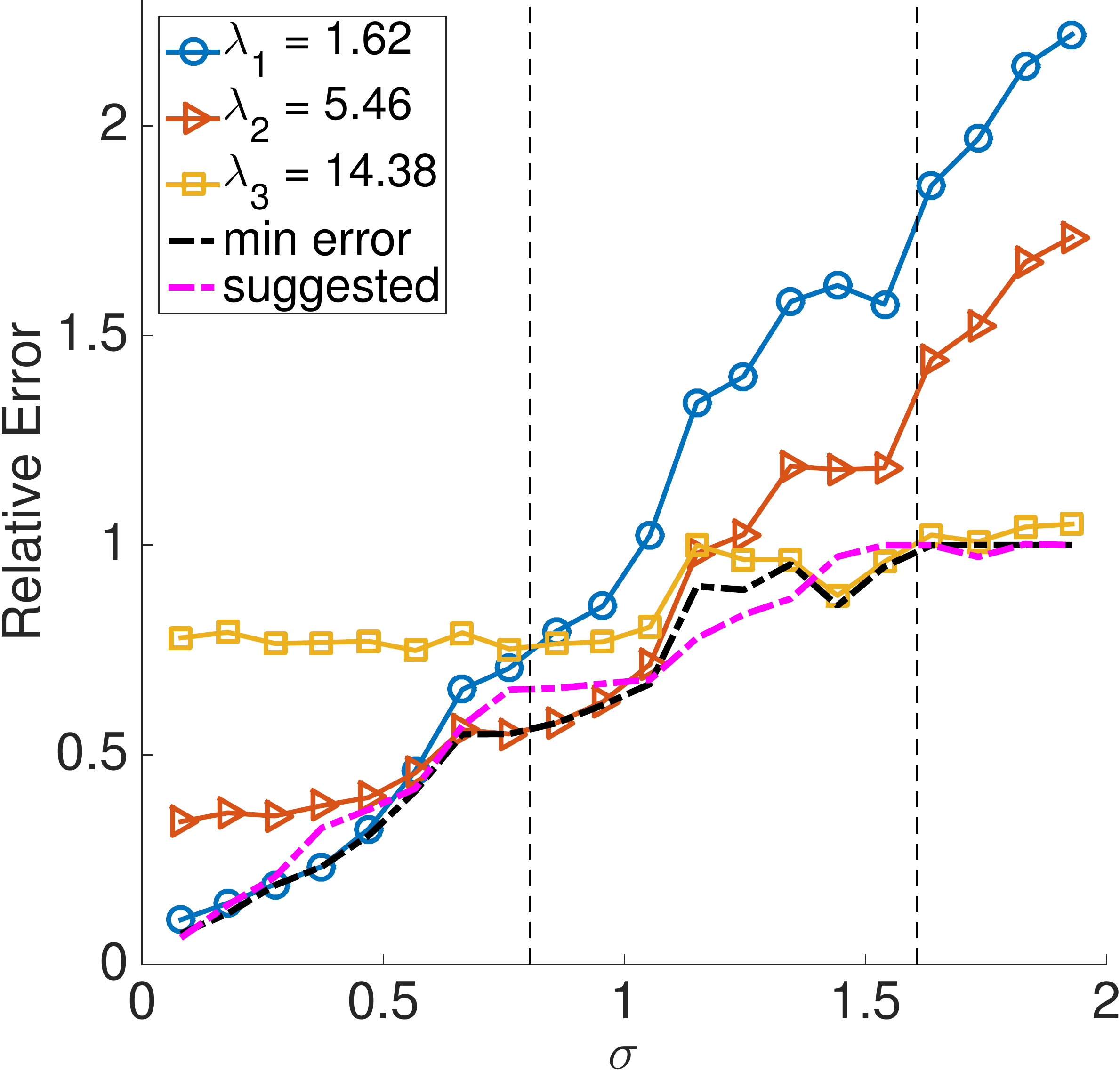}
\label{fig:syn-subspace}}
\subfigure[]{\includegraphics[width=0.3\columnwidth]{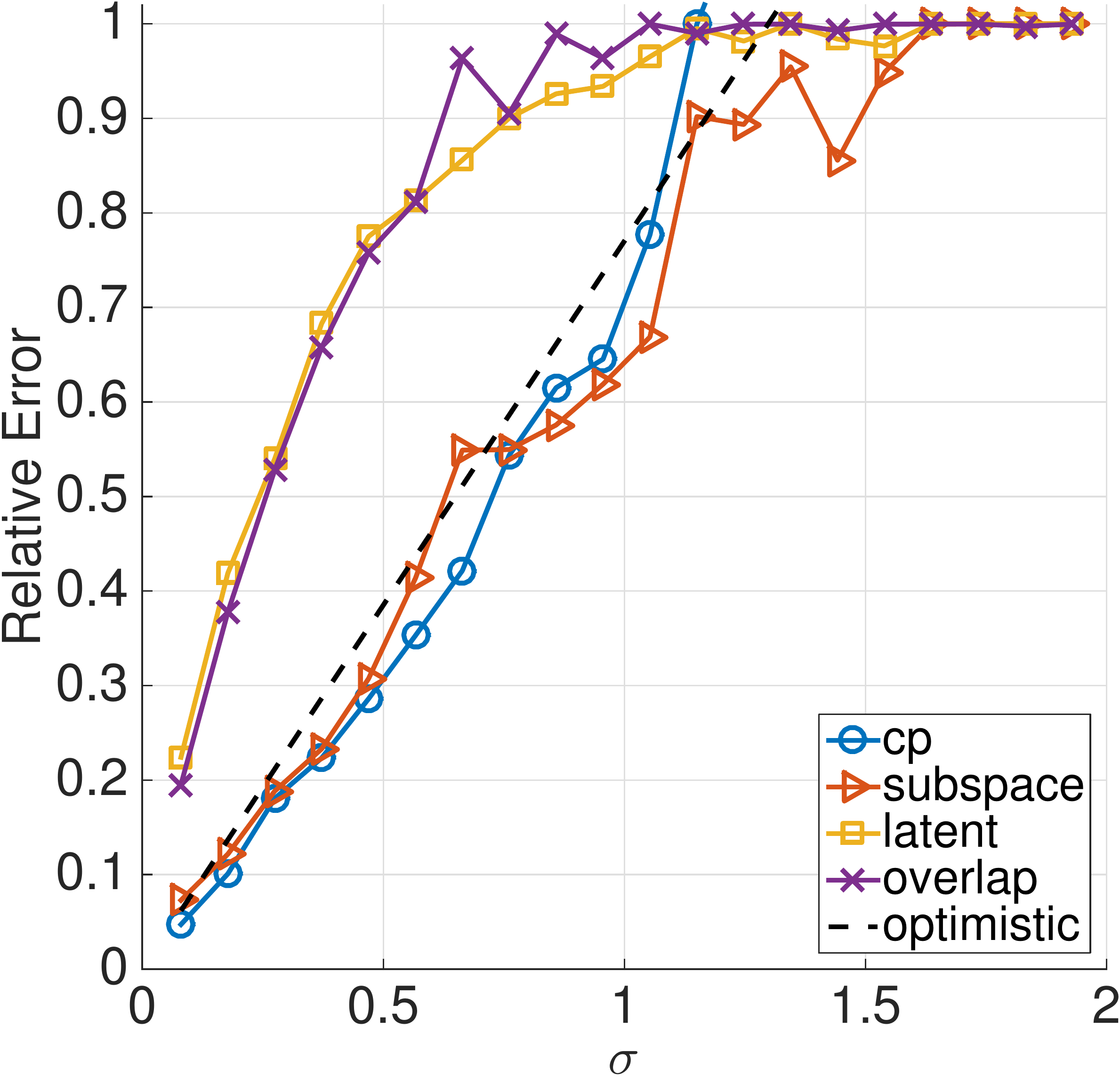}
\label{fig:syn-compare} }
\subfigure[]{\includegraphics[width=.3\columnwidth]{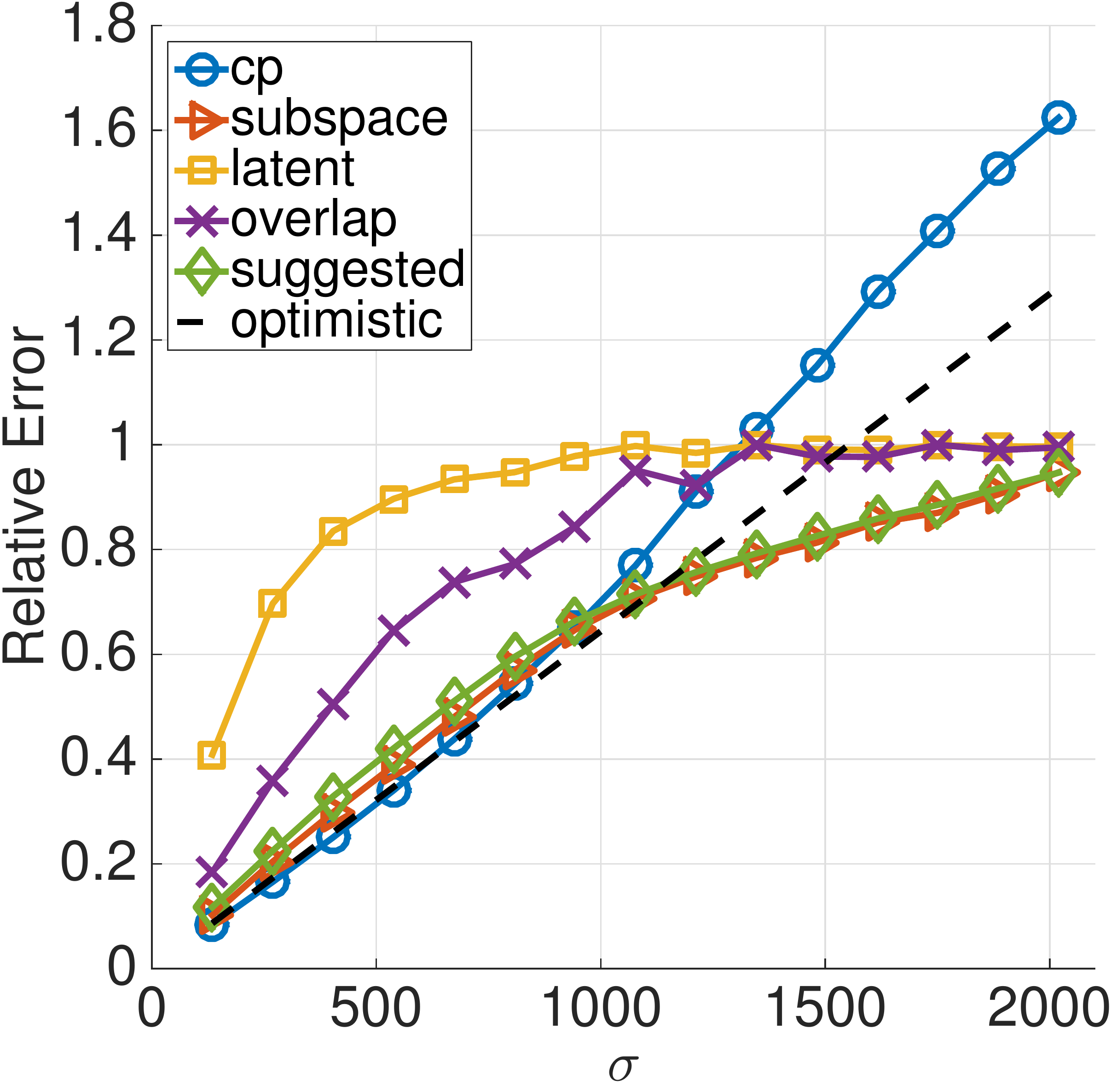}
\label{fig:amino}}
\end{center}
\caption{Tensor denoising. (a) The subspace approach with
    three representative $\lambda$'s on synthetic data. (b)
    Comparison of different methods on synthetic data. (c) Comparison on amino
acids data.}
\label{fig:exp}
\vskip -0.2in
\end{figure} 

\subsection{Amino acids data}
The amino acid dataset \cite{bro1997parafac} is a semi-realistic dataset
commonly used as a benchmark for low rank tensor modeling. It
consists of five laboratory-made samples, each one contains different amounts of tyrosine, tryptophan and phenylalanine. 
The spectrum of their excitation wavelength (250-300 nm) and emission (250-450
nm) are measured by fluorescence, which
gives a $ 5 \times 201 \times 61$ tensor. As the true factors are known to be
these three acids, this data perfectly suits the CP model. 
%
The true rank is fed into CP and the proposed
approach as $H=3$. We computed the solutions of CP for 20 different random initializations,
and the solutions of other approaches with 20 different values of $\lambda$. For the
subspace and the overlapped approach, $\lambda$'s are logarithmically spaced between
$10^3$ and $10^5$. For the latent approach, $\lambda$'s are logarithmically
spaced between $10^4$ and $10^6$. Again, we include the optimistic scaling \eqref{eq:optimistic} to
put the numbers in context.

Figure \ref{fig:amino} shows the smallest relative error achieved by all methods
we compare. Similar to the synthetic data, both CP and the subspace
norm behaves near ideally, though the relative error of CP can be larger
than 1 due to the lack of regularization. Interestingly the
theoretically suggested scaling of the regularization parameter
$\lambda$ is almost optimal.

\section{Conclusion}
We have settled a conjecture posed by \cite{RicMon14} and showed that indeed
$O(n^{K/4})$ signal-to-noise ratio is sufficient also for odd order tensors.
Moreover, our analysis shows an interesting two-phase behavior of the error.
This finding lead us to the development of the proposed subspace norm. The
proposed norm is defined with respect to a set of orthonormal matrices
$\wh{\mP}^{(1)},\ldots,\wh{\mP}^{(K)}$, which are estimated by mode-wise singular
value decompositions. We have analyzed the denoising performance of the proposed
norm, and shown that the error can be bounded by the sum of two terms, which can
be interpreted as an approximation error term coming from the first (non-convex)
step, and an estimation error term coming from the second (convex) step.

\bibliography{lucky}
\bibliographystyle{abbrv}

\appendix
\section{Maximum likelihood estimator}
\label{sec:MLE}
Let $\tY\in\R^{n_1\times\cdots\times n_K}$ be a noisy observed tensor generated as follows:
\begin{align*}
 \tY = \tX^{\ast}+\sigma\tE = \sum_{r=1}^{R}\beta_r \vu_r^{(1)}\circ
 \cdots \circ \vu_r^{(K)} + \sigma\tE,
\end{align*}
where $\tE$ is a noisy tensor whose entries are
i.i.d. normal $\mathcal{N}(0,1)$.

Let $\hat{\tX}_{\text{MLE}}$ be the (intractable) estimator defined as
\begin{align*}
 \hat{\tX}_{\text{MLE}}=\argmin_{\tX}\left(\norm{\tY-\tX}_F^2:\,
 {\rm rank}(\tX)\leq R\right).
\end{align*}

We have the following performance guarantee for $\hat{\tX}_{\text{MLE}}$:
\begin{thm}
\label{thm:mle}
Let $R\leq \min_kn_k/2$. Then there is a constant $c$ such that
\begin{align*}
 \norm{\hat{\tX}_{\text{MLE}}-\tX^{\ast}}_F\leq c\sigma \sqrt{R^K\sum_{k=1}^{K}n_k\log(2K/K_0)+\log(2/\delta)},
\end{align*}
with probability at least $1-\delta$, where $K_0=\log(3/2)$.
\end{thm}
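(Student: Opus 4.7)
My plan is to apply the standard ``basic inequality plus $\epsilon$-net'' template for bounding the estimation error of a constrained MLE. Since $\tX^{\ast}$ has CP rank at most $R$, it is feasible for the minimization, so optimality of $\hat{\tX}_{\text{MLE}}$ gives $\|\tY-\hat{\tX}_{\text{MLE}}\|_F^2\le\|\tY-\tX^{\ast}\|_F^2$. Substituting $\tY=\tX^{\ast}+\sigma\tE$, expanding the squared norms, and cancelling common terms yields the familiar inequality
\[
 \|\tD\|_F^2 \le 2\sigma\,\langle \tE, \tD\rangle, \qquad \tD:=\hat{\tX}_{\text{MLE}}-\tX^{\ast}.
\]
The difference of two tensors of CP rank at most $R$ has CP rank at most $2R$, hence Tucker rank at most $(2R,\ldots,2R)$. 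Dividing the basic inequality by $\|\tD\|_F$ reduces the problem to controlling $\sup_{\tZ\in\mathcal{T}}\langle\tE,\tZ\rangle$, where $\mathcal{T}$ is the set of unit-Frobenius-norm tensors of Tucker rank at most $(2R,\ldots,2R)$.

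The main work lies in covering $\mathcal{T}$ in Frobenius norm. Every $\tZ\in\mathcal{T}$ admits a Tucker decomposition whose mode-$k$ unfolding has exactly the form of \eqref{eq:mode-k-unfolding}, with orthonormal factors $\mU^{(k)}\in\R^{n_k\times 2R}$ on the Stiefel manifold $V_{n_k,2R}$ and a core $\tC\in\R^{2R\times\cdots\times 2R}$ with $\|\tC\|_F=1$. Standard volumetric bounds produce an operator-norm $\epsilon$-net of $V_{n_k,2R}$ of cardinality at most $(c/\epsilon)^{2Rn_k}$, and a Frobenius $\epsilon$-net of the unit ball in $\R^{(2R)^K}$ of cardinality at most $(3/\epsilon)^{(2R)^K}$. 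Since the Tucker map is multilinear in $(\mU^{(1)},\ldots,\mU^{(K)},\tC)$ and each $\mU^{(k)}$ has unit operator norm, perturbing each of the $K$ factors by $\epsilon$ in operator norm and the core by $\epsilon$ in Frobenius norm perturbs $\tZ$ by at most $(K+1)\epsilon$ in Frobenius norm. Fixing $\epsilon$ to be a small constant depending on $K$ thus yields a net $\mathcal{T}_0\subset\mathcal{T}$ with
\[
 \log|\mathcal{T}_0|\ \lesssim\ R\Bigl(\sum\nolimits_{k=1}^{K} n_k\Bigr)\log K+R^K\log K,
\]
which is at most $R^K\bigl(\sum_{k=1}^{K} n_k\bigr)\log(2K/K_0)$ after absorbing both contributions into the looser product form (with $K_0=\log(3/2)$ being the natural constant arising from $(3/\epsilon)^d$ covering bounds at unit radius).

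The remaining step is routine Gaussian concentration: for each fixed $\tZ\in\mathcal{T}_0$ with $\|\tZ\|_F=1$, $\langle\tE,\tZ\rangle$ is $\mathcal{N}(0,1)$, so a union bound over $\mathcal{T}_0$ combined with the standard net-to-supremum conversion $\sup_{\mathcal{T}}\langle\tE,\cdot\rangle\le(1-(K+1)\epsilon)^{-1}\sup_{\mathcal{T}_0}\langle\tE,\cdot\rangle$ yields $\sup_{\tZ\in\mathcal{T}}\langle\tE,\tZ\rangle\le c\bigl(\sqrt{\log|\mathcal{T}_0|}+\sqrt{\log(2/\delta)}\bigr)$ with probability at least $1-\delta$; plugging this back into the basic inequality delivers the stated bound. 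The main obstacle is the covering step: one must verify carefully that the Tucker map is Lipschitz (as a map from Stiefel factors and a Frobenius-ball core into Frobenius tensors) with the right multiplicative constant, and track how operator-norm errors in each of the $K$ factors propagate multilinearly so that the combined $\epsilon$-net cardinality stays of the stated order. The hypothesis $R\le\min_k n_k/2$ ensures that the Stiefel manifolds $V_{n_k,2R}$ are non-degenerate and that the dimension count used above is controlled.
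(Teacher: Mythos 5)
Your proposal is correct, but it takes a genuinely different route from the paper. The paper's proof shares your basic inequality $\norm{\tD}_F^2\lesssim\sigma\ip{\tE,\tD}$ with $\tD=\hat{\tX}_{\text{MLE}}-\tX^{\ast}$, but then applies a H\"older-type duality $\ip{\tE,\tD}\leq\norm{\tE}_{\text{op}}\norm{\tD}_{\text{nuc}}$ with the tensor spectral and nuclear norms, bounds $\norm{\tD}_{\text{nuc}}\leq\sqrt{R^K}\norm{\tD}_F$ by reducing the rank-$2R$ difference to an orthogonal CP decomposition of rank at most $R^K$ (via the Tucker decomposition, citing Kolda), and imports a known tail bound on $\norm{\tE}_{\text{op}}$ for Gaussian tensors; this is where the $\log(2K/K_0)$ factor originates. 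You instead run a direct $\epsilon$-net / empirical-process argument over the set of unit-Frobenius-norm tensors of Tucker rank $(2R,\ldots,2R)$, covering the Stiefel factors and the core separately. The two approaches trade off as follows: the paper's argument is more modular (it delegates all the probabilistic work to a citable spectral-norm bound) but pays the conservative $\sqrt{R^K}$ nuclear-to-Frobenius conversion factor, which the authors themselves acknowledge is loose; your covering argument yields the sharper complexity $R\sum_k n_k\log K+R^K\log K$, which in the regime $R^{K-1}\lesssim\sum_k n_k$ recovers the ``optimistic'' rate \eqref{eq:optimistic} that the paper only presents heuristically, and which you then deliberately relax to the stated product form. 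The only point requiring care in your route is the net-to-supremum conversion: $\sup_{\tZ\in\mathcal{T}}\ip{\tE,\tZ}$ is a linear functional over a nonconvex cone, so the difference $\tZ^{\ast}-\tZ_0$ between the maximizer and its nearest net point lies in the rank-doubled set rather than in $\mathcal{T}$ itself; one must either build the net for the doubled rank from the outset or split a rank-$4R$ element into a bounded number of rank-$2R$ pieces. This is standard (and you flag it), so it is a technicality rather than a gap. Both proofs use the hypothesis $R\leq\min_k n_k/2$ for the same essential reason, namely that the width-$2R$ orthonormal factors (equivalently, the orthogonal CP reduction for the rank-$2R$ difference) must exist.
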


Note that the factor $R^{K}$ in the square root is rather
conservative. In the best case, this factor reduces to linear in $R$ and
this is what we present in Section \ref{sec:exp} as ``optimistic''
ignoring constants and $\delta$; see
Eq. \eqref{eq:optimistic}. 

\begin{proof}[Proof of Theorem \ref{thm:mle}]
Since $\hat{\tX}_{\text{MLE}}$ is a minimizer and $\tX^{\ast}$ is also feasible, we
 have
\begin{align*}
 \norm{\tY-\hat{\tX}_{\text{MLE}}}_F^2\leq\norm{\tY-\tX^{\ast}}_F^2,
\end{align*}
which implies
\begin{align*}
 \norm{\tX^{\ast}-\hat{\tX}_{\text{MLE}}}_F^2&\leq \sigma
 \ip{\tE,\hat{\tX}_{\text{MLE}}-\tX^{\ast}}\\
&\leq \sigma \norm{\tE}_{\text{op}}\norm{\hat{\tX}_{\text{MLE}}-\tX^{\ast}}_{\text{nuc}},
\end{align*}
where
\begin{align*}
 \norm{\tX}_{\text{op}}:=&\sup_{\vu^{(1)},\ldots,\vu^{(K)}
}\bigl\{\!\!\sum_{i_1,i_2,\ldots,i_K}\mathcal{X}_{i_1,i_2,\ldots,i_K}u_{i_1}^{(1)}u_{i_2}^{(2)}\cdots u_{i_K}^{(K)}:\\
&\,\|\vu^{(1)}\|=\|\vu^{(2)}\|=\cdots=\|\vu^{(K)}\|=1\bigr\}
\end{align*}
is the tensor spectral norm and the nuclear norm
\begin{align*}
 \norm{\tX}_{\text{nuc}}:=&\inf_{\vu^{(1)},\ldots,\vu^{(K)}}\bigl\{
\sum_{r}\|\vu_r^{(1)}\|\cdot\|\vu_r^{(2)}\|\cdots\|\vu_r^{(K)}\|:\\
&\,\tX=\sum_{r=1}^{R}\vu_r^{(1)}\circ \cdots \circ \vu_r^{(K)}
\bigr\}
\end{align*}
is the dual of the spectral norm.

Since both $\hat{\tX}_{\text{MLE}}$ and $\tX^{\ast}$ are rank
at most $R$, the difference $\hat{\tX}_{\text{MLE}}-\tX^{\ast}$ is rank
at most $2R$. Moreover, any rank-$R$ CP decomposition with $R\leq \min_kn_k$can be reduced to
an orthogonal CP decomposition with rank at most $R^K$ via the Tucker decomposition
 \cite{Kol01}. Thus, denoting this orthogonal decomposition by
 $\hat{\tX}_{\text{MLE}}-\tX^{\ast}=\sum_{r=1}^{R^K}\tilde{\vu}_r^{(1)}\circ\cdots\circ\tilde{\vu}_r^{(K)}$
 and using
 $\beta_r:=\|\tilde{\vu}_r^{(1)}\|\cdots\|\tilde{\vu}_r^{(K)}\|$, we have
\begin{align*}
 \norm{\hat{\tX}_{\text{MLE}}-\tX^{\ast}}_{\text{nuc}}
&\leq
 \sum_{r=1}^{R^K}\beta_r\leq
 \sqrt{R^K}\sqrt{\sum\nolimits_{r=1}^{R^K}\beta_r^2}\\
&=\sqrt{R^K}\norm{\hat{\tX}_{\text{MLE}}-\tX^{\ast}}_F,
\end{align*}
where the last equality follows because the decomposition is orthogonal.

Finally applying the tail bound for the
spectral norm $\|\tE\|_{\rm op}$ of random Gaussian tensor $\tE$ \cite{TomSuz14}, we obtain what
we wanted.
\end{proof}

\section{Details of optimization}
\label{sec:optimization}
\begin{algorithm}[tb]
   \caption{Tensor denoising via the subspace norm}
   \label{alg:denoise}
\begin{algorithmic}
   \STATE {\bfseries Input:} noisy tensor $\tY$, subspace dimension $H$, regularization constant $\lambda$
   \FOR{$k=1$ {\bfseries to} $K$}
   \STATE $\wh{\mP}^{(k)} \longleftarrow$ top $H$ left singular vectors of
   $\mY_{(k)}$
   \ENDFOR
   \FOR{$k=1$ {\bfseries to} $K$}
   \STATE $\mS^{(k)} \longleftarrow \wh{\mP}^{(1)} \otimes \cdots
 \otimes \wh{\mP}^{(k-1)}\otimes \wh{\mP}^{(k+1)}
 \otimes\cdots  \otimes
   \wh{\mP}^{(K)} $ 
   \ENDFOR
   \STATE {\bfseries Output:} $\wh{\tX} = \argmin_{\tX} \frac{1}{2} \norm{\tY -
   \tX}^2_F + \lambda \norm{\tX}_\lucky$.
\end{algorithmic}
\end{algorithm}

For solving problem \eqref{eq:lucky}, we follow the alternating direction
method of multipliers described in \citet{TomSuzHayKas11}.
We scale the objective function in \eqref{eq:lucky} by $1/\lambda$, and consider the dual problem
\begin{equation}
\begin{aligned}
\label{eq:dual}
\min_{\tD, \{ \mW^{(k)} \}^K_{k=1} } & \;\; \frac{\lambda}{2} \norm{\tD}^2_F - \ip{\tD, \tY}\\
\text{s.t.} & \;\; \max_k \| \mW^{(k)} \| \leq 1, \\
 & \;\; \mW^{(k)} = \mD_{(k)} \mS^{(k)}, \; k = 1, \ldots, K,
\end{aligned}
\end{equation}
where $\tD \in \R^{n_1 \times n_2 \times \cdots \times n_K}$ is the dual tensor that corresponds to the residual in the primal
problem \eqref{eq:lucky}, and $\mW^{(k)}$'s are auxiliary variables introduced
to make the problem equality constrained.

The augmented Lagrangian function of problem \eqref{eq:dual} could be written as
follows:
\begin{align*}
&\hspace{0.25in} L_\eta(\tD, \{ \mW^{(k)} \}_{k=1}^K, \{ \mM^{(k)} \}_{k=1}^K) \\
&= \frac{\lambda}{2} \norm{\tD}^2 - \ip{\tD, \tY} + \sum_{k=1}^K \big( \ip{ \mM^{(k)}, \mD_{(k)}\mS^{(k)} - \mW^{(k)}} \\
&\hspace{0.15in} + \frac{\eta}{2} \| \mD_{(k)}\mS^{(k)} - \mW^{(k)} \|^2_F + \Id_{ \|\cdot\| \leq 1  } ( \mW^{(k)} )\big),
\end{align*}
where $\mM^{(k)}$'s are the multipliers, $\eta$ is the augmenting parameter, and
$\Id_{\|\cdot\| \leq 1}$ is the indicator function of the unit spectral norm
ball. 

We follow the derivation in \cite{TomSuzHayKas11} and conclude that the updates of
$\tD$, $\mM^{(k)}$ and $\mW^{(k)}$ can be computed in closed forms. We further
combine the updates of $\mW^{(k)}$ and other steps so that it needs not to be
explicitly computed. The sum of the products of $\mM^{(k)}$ and
${\mS^{(k)}}^\T$ finally converges
to the solution of the primal problem \eqref{eq:lucky}, see Algorithm \ref{alg:admm}. 

The update for the Lagrangian multipliers $\mM^{(k)}$ ($k=1,\ldots,K$)
is written as  singular value soft-thresholding operator defined as
\[ \text{prox}^{tr}_{\eta} (\mZ ) =  \mP\max(\mSigma-\eta,0)\mQ^\T, \]
where $\mZ=\mP \mSigma\mQ^\T$ is the SVD of $\mZ$.

\begin{algorithm}[tb]
   \caption{ADMM for subspace norm minimization}
   \label{alg:admm}
\begin{algorithmic}
   \STATE {\bfseries Input:} $\tY$,
   $\lambda$, $\mS^{(1)}, \ldots, \mS^{(K)}$, $\eta$, initializations $\tD_0$,
   $\{\mM^{(1)}_0,\ldots,\mM^{(K)}_0\}$
   \STATE $t = 0$
   \REPEAT
   \STATE 
       $ \tD_{t+1} = \frac{1}{\lambda+\eta K}
   \bigg(  \tY + K \eta \tD_t - \sum_k \fold_k \bigl( (2 \mM^{(k)}_t -
   \mM^{(k)}_{t-1} ) {\mS^{(k)}}^\T \bigr) \bigg)
   $
   \FOR{$k=1$ {\bfseries to} $K$}
        \STATE $\mM^{(k)}_{t+1} = \text{prox}^{tr}_{\eta} \left(
        \mM^{(k)}_t + \eta \mD_{(k), t+1} \mS^{(k)}\right)$
   \ENDFOR
   \STATE $ t \leftarrow t + 1 $
   \UNTIL{convergence}
   \STATE {\bfseries Output:} $\wh{\tX} = \sum_{k=1}^K \mM^{(k)}_t
   {\mS^{(k)}}^\T$.
\end{algorithmic}
\end{algorithm}

A notable property of the subspace norm is the computational efficiency.
The update of $\mM^{(k)}$ requires singular value decomposition, which usually
dominates the costs of computation. For problem \eqref{eq:dual}, the size of
$\mM^{(k)}$ is only $n_k \times H^{K-1}$. Comparing with previous approaches,
e.g. the latent approach whose multipliers are $n_k \times \prod_{k'\neq
k}n_{k'}$ matrices, the size of our variables is much
smaller, so the per-iteration cost is reduced. 

\section{Additional experiments}
\label{sec:moreexp}
We report the experimental results when the input rank of CP and the subspace
approach is 
are over-specified, on the same synthetic dataset as Section
\ref{sec:exp}. 
We consider the case where the input rank is 8.

We impose the $\ell_2$ regularizations on the factors of CP. We test 20 values
that are logarithmically spaced between 0.01 and 10 are the regularization
parameter. For each value, we compute 20 solutions with random initializations
and select the one with lowest objective value. 

For the subspace approach, we computed solutions for 20 values of the
regularization parameter that are logarithmically spaced between 1 and 1000.

As before, we report the minimum relative error obtained by the same method.
The results are shown in Figure~\ref{fig:addition}. We include the case the rank
is specified incorrectly for comparison. Clearly, even if the rank is much larger
than the truth, the subspace approach and CP are
robust with proper regularization.

\begin{figure}[tb]
\begin{center}
    \includegraphics[width=\columnwidth]{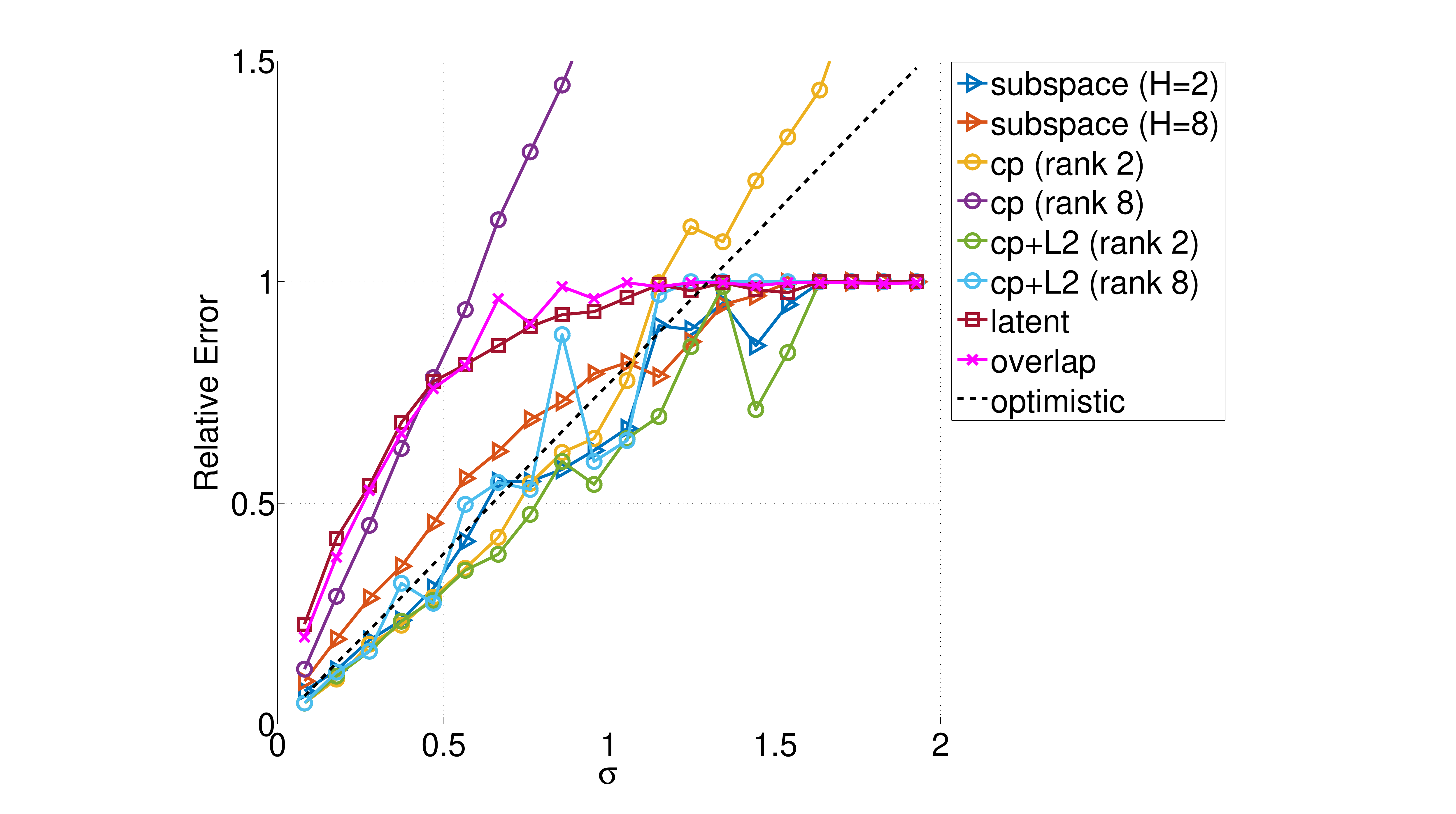}
\end{center}
\caption{Tensor denoising on synthetic dataset when the input rank is larger
than the truth.}
\label{fig:addition}
\vskip -0.2in
\end{figure}

\section{Proofs}
\subsection{Proof of Theorem \ref{thm:leftsv}}
\label{sec:proof-thm-leftsv}
We consider the second moment of $\tilde{\mX}$:
\begin{align*}
 \tilde{\mX}\tilde{\mX}^\T & = \beta^2 \vu \vu^\T + \sigma^2 \mE \mE^\T +
 \beta \sigma (\vu \vv^\T \mE^\T + \mE \vv \vu^\T)\\
 & = \overbrace{\beta^2 \vu \vu^\T + m \sigma^2 \mI}^{\mB} +\\
 & \hspace{1cm}
 \overbrace{\sigma^2 \mE \mE^\T - m \sigma^2
 \mI + \beta\sigma(\vu \vv^\T \mE^\T + \mE \vv \vu^\T)}^{\mG}.
\end{align*}
The eigenvalue decomposition of $\mB$ can be written as
\[ \mB = [ \vu \; \mU_2]
\begin{bmatrix}
\beta^2 + m \sigma^2  & \\ & m\sigma^2 \mI
\end{bmatrix}
\begin{bmatrix}
\vu^\T \\ \mU_2^\T
\end{bmatrix}.
\]

We first show a deterministic lower bound for $|\ip{\hat{\vu},\vu}|$
assuming $\beta^2\geq 2\|\mG\|$, where $\hat{\vu}$ is the leading
eigenvector  of $\tilde{\mX}\tilde{\mX}^\T$. Then we bound the spectral
norm $\|\mG\|$ of the noise term (Lemma \ref{lemma:operator-norm}) and derive the sufficient condition for $\beta$.

Let $\hat{\vu}$ be the leading eigenvector of
 $\tilde{\mX}\tilde{\mX}^\T$ with eigenvalue $\hat{\lambda}$, $\vr = \mB
 \hat{\vu} - \hat{\lambda} \hat{\vu} = - \mG\hat{\vu}$.
We have $\mU^\T_2 \vr = (m \sigma^2 - \hat{\lambda}) \mU^\T_2
\hat{\vu}$. Hence, for all $\beta^2 >  2 \| \mG \|$, it holds that
\begin{align*}
 \abs{\sin(\hat{\vu}, \vu)}& = \| \mU^\T_2 \hat{\vu}\|_2 = \frac{ \|
 \mU^\T_2 \vr\|_2}{ \hat{\lambda} - m\sigma^2 } 
 \leq \frac{\| \mG \|}{ \beta^2 - \| \mG \| } \leq \frac{2\|\mG\|}{\beta^2},
\end{align*}
where we used $\|\mU^\T_2\vr\|_2 = \| \mU^\T_2 \mG \hat{\vu}\|_2 \leq \|
 \mG \|$, and $\hat{\lambda} \geq \vu^\T \tilde{\mX} \tilde{\mX}^\T
 \vu^\T \geq \beta^2 + m\sigma^2 -\|\mG\|$.
Therefore,
 \[ \abs{\ip{\hat{\vu}, \vu }} = \abs{ \cos(\hat{\vu}, \vu) } \geq \sqrt{1 -
 \frac{4\|\mG\|^2}{\beta^4}} \geq 1 - \frac{4\|\mG\|^2}{\beta^4},   \]
if $\beta^2\geq 2\|\mG\|$.

 It follows from Lemma \ref{lemma:operator-norm} (shown below) that
 \[
     \| \mG \| \leq
 \begin{cases}
     2\bar{C} \sigma^2 \sqrt{mn}, & \text{if} \quad \beta/\sigma <\sqrt{m},\\
     2\bar{C} \beta \sigma \sqrt{n}, & \text{otherwise},
 \end{cases}
 \]
 where $\bar{C}$ is a universal constant with probability at least $1-4e^{-n}$.

Now consider the first case ($\beta/\sigma<\sqrt{m}$) and assume
$\beta^2\geq 4\bar{C}\sigma^2\sqrt{mn}\geq 2\|\mG\|$. Note that this
case only arises when $\sqrt{m}\geq 4\bar{C}\sqrt{n}$. Denoting
$C=16\bar{C}^2$, we obtain the first case in the theorem. Next, consider
the second case ($\beta/\sigma\geq\sqrt{m}$). If $\sqrt{m}\geq
4\bar{C}\sqrt{n}$ as above, we have $\beta/\sigma\geq 4\bar{C}\sqrt{n}$, which
implies $\beta^2\geq 2\|\mG\|$ and we obtain the second case in the
theorem. On the other hand, if $\sqrt{m}<4\bar{C}\sqrt{n}$, we require
$\beta/\sigma\geq4\bar{C}\sqrt{n}$ to obtain the last case in the
theorem.

\begin{lemma}
\label{lemma:operator-norm}
Let $\mG$ be constructed as in Theorem \ref{thm:leftsv}. If $m\geq n$, 
there exists an universal constant $\bar{C}$ such that 
\[ \| \mG \| \leq \bar{C} \sigma^2 \left(\sqrt{mn} +\sqrt{n(\beta/\sigma)^2}\right), \]
with probability at least $1-4e^{-n}$.
\end{lemma}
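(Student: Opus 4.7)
The plan is to split $\mG$ via the triangle inequality into the centered Wishart piece $\mG_1 := \sigma^2(\mE\mE^\T - m\mI)$ and the signal--noise cross piece $\mG_2 := \beta\sigma(\vu\vv^\T\mE^\T + \mE\vv\vu^\T)$, bound each with off-the-shelf Gaussian concentration, and union-bound the two failure events at the end.

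For $\mG_1$, I would invoke the standard non-asymptotic singular value bound for an $n\times m$ i.i.d.\ standard Gaussian matrix (see, e.g., the review \cite{Ver10}): with probability at least $1-2e^{-t^2/2}$, every singular value of $\mE$ lies in $[\sqrt{m}-\sqrt{n}-t,\;\sqrt{m}+\sqrt{n}+t]$. Taking $t=\sqrt{2n}$, the eigenvalues of $\mE\mE^\T$ lie in a window of width $O(\sqrt{mn})$ around $m$; expanding $(\sqrt{m}\pm\sqrt{n}\pm\sqrt{2n})^2-m$ and using $m\geq n$ to absorb the subleading $O(n)$ terms into the dominant $O(\sqrt{mn})$ term yields $\|\mE\mE^\T-m\mI\|\leq C_1\sqrt{mn}$ for an absolute constant $C_1$, and hence $\|\mG_1\|\leq C_1\sigma^2\sqrt{mn}$. (A minor subtlety: if $\sqrt{m}-\sqrt{n}-t<0$ the Davidson--Szarek lower bound is vacuous, but then $m=O(n)$ and $\|\mE\mE^\T-m\mI\|\leq\sigma_1^2(\mE)+m=O(n)=O(\sqrt{mn})$ trivially.)

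For $\mG_2$, set $\vw:=\mE\vv$, so $\mG_2 = \beta\sigma(\vu\vw^\T + \vw\vu^\T)$ is symmetric of rank at most two. Writing $\vw=(\vu^\T\vw)\vu+b\vu^\perp$ for a unit vector $\vu^\perp\perp\vu$ and diagonalising the resulting $2\times 2$ symmetric block $\bigl(\begin{smallmatrix}2a&b\\b&0\end{smallmatrix}\bigr)$ yields eigenvalues $a\pm\sqrt{a^2+b^2}$, hence $\|\vu\vw^\T+\vw\vu^\T\|\leq 2\|\vw\|$. Because $\vv$ is a unit vector and $\mE$ has i.i.d.\ $\mathcal{N}(0,1)$ entries, $\vw\sim\mathcal{N}(0,\mI_n)$ and $\|\vw\|^2\sim\chi_n^2$. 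The Laurent--Massart tail bound then gives $\|\vw\|\leq C_2\sqrt{n}$ with probability at least $1-e^{-n}$, so $\|\mG_2\|\leq 2C_2\beta\sigma\sqrt{n}$.

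Union-bounding the two events gives probability at least $1-3e^{-n}\geq 1-4e^{-n}$, on which
\[
    \|\mG\|\leq \|\mG_1\|+\|\mG_2\| \leq C_1\sigma^2\sqrt{mn}+2C_2\beta\sigma\sqrt{n} \leq \bar C\,\sigma^2\bigl(\sqrt{mn}+\sqrt{n(\beta/\sigma)^2}\bigr),
\]
with $\bar C:=\max(C_1,2C_2)$, which is the claim. No genuine obstacle arises; the two ideas to check are that the rank-two structure of $\mG_2$ reduces its spectral norm to a scalar $\|\mE\vv\|$ (rather than the full $\|\mE\|$ which scales as $\sqrt{m}$), and that $m\geq n$ lets the $O(n)$ remainder of the Wishart bound be dominated by the leading $O(\sqrt{mn})$. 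The remaining work is pure bookkeeping of the deviation parameters so that all failure probabilities are at most $e^{-n}$.
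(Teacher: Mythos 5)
Your proof is correct, but it takes a genuinely different route from the paper. The paper runs a single $\epsilon$-net argument on the whole matrix $\mG$: for a fixed unit $\vx$ it expands $\vx^\T\mG\vx = \sigma^2(\|\vz\|^2-m) + 2\beta\sigma(\vu^\T\vx)\gamma$ with $\vz=\mE^\T\vx$ and $\gamma=\vv^\T\vz$, controls the $\chi^2_m$ and scalar Gaussian terms by Laurent--Massart and a Gaussian tail bound, and then passes from the net to $\|\mG\|$ via Vershynin's Lemmas 5.2 and 5.4. You instead split $\mG$ by the triangle inequality and bound each piece with an off-the-shelf result: the Davidson--Szarek singular-value concentration for the centered Wishart part, and the observation that the cross term is symmetric of rank at most two with spectral norm at most $2\beta\sigma\|\mE\vv\|$, where $\mE\vv\sim\mathcal{N}(0,\mI_n)$. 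Both arguments hinge on the same essential point --- that the signal-dependent noise only sees $\mE$ through the single direction $\vv$, so it contributes $O(\sqrt{n})$ rather than $O(\sqrt{m})$ --- but your version makes this structurally explicit via the rank-two diagonalisation, whereas the paper's version buries it in the scalar $\gamma$. Your modular approach is arguably cleaner and avoids re-deriving Wishart concentration through the net; the paper's net argument has the advantage of extending verbatim to the rank-$R$ generalisation in Appendix E (where the cross term is no longer rank two but rank $\leq 2R$), though your decomposition would also extend with $\|\mE\mV\|$ in place of $\|\mE\vv\|$. Your handling of the edge case $\sqrt{m}-\sqrt{n}-t<0$ and the use of $m\geq n$ to absorb the $O(n)$ remainder into $O(\sqrt{mn})$ are both sound, and your failure probability $1-3e^{-n}$ is slightly better than the stated $1-4e^{-n}$.
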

\begin{proof}
The proof is an $\epsilon$-net argument.
Let
\[ 
    \lambda = 
2\sigma^2 \left(\sqrt{4mn} + 4n + \sqrt{8n(\beta/\sigma)^2} \right).
\]
The goal is to control $ | \vx^\T \mG \vx |$ for all the vectors $\vx$ on the unit Euclidean sphere
$\tS^{n-1}$. In order to do this, we first bound the probability of the tail event $|\vx^\T \mG \vx| > \lambda$,
for any fixed $\vx \in \tS^{n-1}$.
Then we bound the probability that $|\vx^\T \mG \vx| > \lambda $ for all the vectors in a 
$\epsilon$-net $\tN_{\epsilon}$. Finally, we establish the connection
between $\sup_{\vx \in \tN_{\epsilon} } | \vx^\T \mG \vx |$ and $\| \mG \|$.

To bound $\P( |\vx^\T \mG \vx| > \lambda)$ for a fix $\vx \in
 \tS^{n-1}$, we expand $\vx^\T \mG \vx$ as
$$\vx^\T\mG \vx=\sigma^2 (\| \vz \|^2 - m) + 2\beta\sigma(\vu^\T\vx)
 \gamma, $$
where $\vz = \mE^\T\vx$ and $\gamma=\vv^\T\vz$. Since  $\vz \sim \tN(0,
 \mI)$, we can see that $\|\vz\|^2$ is $\chi^2$ distributed with $m$
degrees of freedom and $\gamma \sim\tN(0, 1)$.

First we bound the deviation of the $\chi^2$ term. By the corollary of
 Lemma 1 in \citet{LauMas00}, we have
\begin{align}
\label{ineq:chi2}
    \P(\big| \| \vz \|^2 - m \big| > \lambda_1) \leq  2 e^{-4n},
\end{align}
where $\lambda_1=2(\sqrt{4mn}+4n)$.

Next we bound the deviation of the Gaussian term. 
Using the Gaussian tail inequality, we have
\begin{align}
\label{ineq:gaussian}
\P\left( | \gamma | >\lambda_2 \right) \leq 2e^{-4n}, 
\end{align}
where $\lambda_2=\sqrt{8n}$.

Combining inequalities \eqref{ineq:chi2} and \eqref{ineq:gaussian}, we have
\begin{align*}
&\P(|\vx^\T \mG \vx| > \lambda)\\
&\leq\; \P\left( \sigma^2 \big|  \| \vz \|^2 - m \big|  +
| 2\beta\sigma(\vu^\T\vx)  \gamma | > \sigma^2\lambda_1 + 2\beta\sigma\lambda_2 \right)\\
&\leq \; \P\left( \big| \| \vz \|^2 - m \big| > \lambda_1 \lor
    |\gamma| >  \lambda_2\right)\\
&\leq \; \P\left( \big| \| \vz \|^2 - m \big| > \lambda_1 \right) +  \P \left(
    |\gamma| >  \lambda_2\right)\\
&\leq \; 4e^{-4n},
\end{align*}
where the second to last line follows from the union bound.

Furthermore, using Lemma 5.2 and 5.4 of
\citet{Ver10}, for any $\epsilon \in [0, 1)$, it holds that 
\[
    |\tN_\epsilon| \leq (1+ 2/\epsilon)^n, 
\]
and
\begin{equation*}
    \| \mG \| \leq (1-2\epsilon)^{-1} \sup_{\vx \in \tN_\epsilon} | \vx^\T \mG
    \vx |.
\end{equation*}
Taking the union bound over all the vectors in $\tN_{1/4}$, we obtain
\[
\P\left(\sup_{\vx \in \tN_{1/4} } |\vx^\T \mG \vx| > \lambda\right) \leq
| \tN_{1/4} | 4e^{-4n} < 4e^{-n}.
\]

Finally, the statement is obtained by noticing that 
$n \leq m$.\hfill$\Box$

We prove a more general version
of the theorem that allows the signal part to be rank $R$ in Appendix \ref{sec:leftsv_rankR}.

\subsection{Proof of Lemma \ref{lem:dualnorm}}
\label{sec:proof-dualnorm}
\begin{proof} 
By definition,
\[
\begin{aligned}
    \norm{\tY}_{\dual}
    & =&&\sup_{ \{ \mM^{(k)} \}^K_{k=1} } \ip{\tY, \sum_{k=1}^K \fold_k(\mM^{(k)}{\mS^{(k)}}^\T)}\\
    & &&\text{s.t.} \sum_{k=1}^K \|\mM^{(k)}\|_* \leq 1\\
    & =&&\sup_{\{ \mM^{(k)} \}^K_{k=1} } \sum_{k=1}^K \ip{\mY_{(k)}\mS^{(k)}, \mM^{(k)}}\\
    & &&\text{s.t.} \sum_{k=1}^K \|\mM^{(k)}\|_* \leq 1\\
    & =&&\max_k \|\mY_{(k)}\mS^{(k)}\|,
\end{aligned}
\]
where we used the H\"{o}lder inequality in the last line.
\end{proof}

\subsection{Proof of Theorem \ref{thm:lucky}}
\label{sec:proof-thm-lucky}
First we decompose the error as
\begin{align*}
 \norm{\tX^{\ast}-\hat{\tX}}_F\leq \norm{\tX^{\ast}-\tX_p}_F + \norm{\tX_p-\hat{\tX}}_F.
\end{align*}
The first term is an approximation error that depends on the choice of
the subspace $\mS^{(k)}$. The second term corresponds to an estimation
error and we analyze the second term below.

Since $\hat{\tX}$ is the minimizer of \eqref{eq:estimator} and $\tX_p$ is
feasible,
\begin{align*}
\hspace{-1cm}
\frac{1}{2}\norm{\tY-\hat{\tX}}_F^2+\lambda\sum_{k=1}^{K}\|\hat{\mM}^{(k)}\|_{\ast}
\leq \frac{1}{2}\norm{\tY-\tX_p}_F^2+\lambda\sum_{k=1}^{K}\|\mM_p^{(k)}\|_{\ast},
\end{align*}
from which we have
\begin{align}
\label{ineq:step1}
 \frac{1}{2}\norm{\tX_p-\hat{\tX}}_F^2\leq
\norm{\tY-\tX_p}_{\lucky^\ast}\norm{\tX_p-\hat{\tX}}_{\lucky}
+\lambda\sum_{k=1}^{K}\left(\|\mM_p^{(k)}\|_{\ast}-\|\hat{\mM}^{(k)}\|_{\ast}\right).
\end{align}

Next we define $\mDelta_k:=\hat{\mM}^{(k)}-\mM_p^{(k)}\in\R^{n_k\times
H^{K-1}}$ and define its orthogonal decomposition
$\mDelta_k=\mDelta_k'+\mDelta_k''$ as
\begin{align*}
 \mDelta_k'' := (\mI_{n_k}-\mP_{U_p})\mDelta_k(\mI_{H^{K-1}}-\mP_{V_p}),
\end{align*}
where $\mP_{U_p}$ and $\mP_{V_p}$ are
projection matrices to the column and row spaces of $\mM_p^{(k)}$,
respectively, and $\mDelta_k':=\mDelta_k-\mDelta_k''$.

The above definition allows us to decompose $\|\hat{\mM}^{(k)}\|_{\ast}$
as follows:
\begin{align}
 \|\hat{\mM}^{(k)}\|_{\ast}&=\|\mM_p^{(k)}+\mDelta_k''+\mDelta_k'\|_{\ast}\notag\\
 \label{ineq:step2}
&\geq \|\mM_p^{(k)}\|_{\ast}+\|\mDelta_k''\|_{\ast}-\|\mDelta_k'\|_{\ast}.
\end{align}

Moreover,
\begin{align}
\label{ineq:step3}
 \norm{\tX_p-\hat{\tX}}_{\lucky}\leq
 \sum_{k=1}^{K}\|\mDelta_k\|_{\ast}\leq \sum_{k=1}^{K}\left(\|\mDelta_k'\|_{\ast}+\|\mDelta_k''\|_{\ast}\right)
\end{align}

Combining inequalities \eqref{ineq:step1}--\eqref{ineq:step3}, we have
\begin{align}
\label{ineq:step4}
 \frac{1}{2}\norm{\tX_p-\hat{\tX}}_F^2\leq
(\norm{\tY-\tX_p}_{\lucky^{\ast}}+\lambda)\sum_{k=1}^{K}\|\mDelta_k'\|_{\ast}+(\norm{\tY-\tX_p}_{\lucky^{\ast}}-\lambda)\sum_{k=1}^{K}\|\mDelta_k''\|_{\ast}.
\end{align}

Since
\begin{align*}
 \norm{\tY-\tX_p}_{\lucky^\ast}\leq \sigma\norm{\tE}_{\lucky^{\ast}}+\norm{\tX^{\ast}-\tX_p}_{\lucky^{\ast}},
\end{align*}
if $\lambda\geq
\sigma\norm{\tE}_{\lucky^{\ast}}+\norm{\tX^{\ast}-\tX_p}_{\lucky^{\ast}}$, the
second term in the right-hand side of inequality \eqref{ineq:step4} can
be ignored and we have
\begin{align}
 \frac{1}{2}\norm{\tX_p-\hat{\tX}}_F^2&\leq
 2\lambda\sum_{k=1}^{K}\|\mDelta_k'\|_{\ast}\notag\\
&\leq 2\lambda \sum_{k=1}^{K}\sqrt{2r_k}\|\mDelta_k'\|_F\notag\\
&\leq 2\lambda \sum_{k=1}^{K}\sqrt{2r_k}\|\mDelta_k\|_F\notag\\
\label{ineq:step5}
&\leq 2\sqrt{2}\lambda\sqrt{\sum_{k=1}^{K}r_k}\sqrt{\sum_{k=1}^{K}\|\mDelta_k\|_F^2},
\end{align}
where in the second line we used a simple observation that 
 ${\rm rank}(\mDelta_k')\leq 2r_k$.

Next, we relate the norm $\norm{\tX_p-\hat{\tX}}_F$ to the sum
$\sum_{k=1}^{K}\|\mDelta_k\|_F^2$ in the right-hand side of inequality
\eqref{ineq:step5}.

First suppose that
$\sum_{k=1}^{K}\|\mDelta_k\|_F^2\leq\norm{\tX_p-\hat{\tX}}_F^2$. Then
 from inequality \eqref{ineq:step5}, we have
\begin{align*}
\norm{\tX_p-\hat{\tX}}_F\leq 4\sqrt{2}\lambda\sqrt{\sum_{k=1}^{K}r_k}
\end{align*}
by dividing both sides by $\norm{\tX_p-\hat{\tX}}_F$.

On the other hand, if $\norm{\tX_p-\hat{\tX}}_F^2\leq
\sum_{k=1}^{K}\|\mDelta_k\|_F^2$, we use the following lemma
\begin{lemma}
\label{lem:coherence}
Suppose $\{\mM_p^{(k)}\}_{k=1}^{K},\{\hat{\mM}^{(k)}\}_{k=1}^{K}\in
 \mathcal{M}(\rho)$, and $\mS^{(k)}$ is constructed as a Kronecker product of $K-1$
 ortho-normal matrices $\wh{\mP}^{(\ell)}$ as
 $\mS^{(k)}=\wh{\mP}^{(k-1)}\otimes\cdots\otimes \wh{\mP}^{(k+1)}$,
 where $(\wh{\mP}^{(\ell)})^\T\wh{\mP}^{(\ell)}=\mI_{H}$ for
 $\ell=1,\ldots,K$. Then for
$\tX_p=\sum_{k=1}^{K}\fold_k\left(\mM_p^{(k)}{\mS^{(k)}}^\T\right)$
 and
 $\hat{\tX}=\sum_{k=1}^{K}\fold_k\left(\hat{\mM}^{(k)}{\mS^{(k)}}^\T\right)$,
the following inequality holds:
\begin{align}
\label{ineq:step6}
 \frac{1}{2}\sum_{k=1}^{K}\|\mDelta_k\|_F^2 \leq \frac{1}{2}\norm{\tX_p-\hat{\tX}}_F^2+\rho\max_k(\sqrt{n_k}+\sqrt{H^{K-1}})\sum_{k=1}^{K}\|\mDelta_k\|_{\ast}.
\end{align}
\end{lemma}
\begin{proof}
The proof is presented in Section \ref{sec:proof-lemma-coherence}.
\end{proof}

Combining inequalities \eqref{ineq:step4} and \eqref{ineq:step6}, we
 have
\begin{align*}
\frac{1}{2}\sum_{k=1}^{K}\|\mDelta_k\|_F^2\leq  &
\left(\norm{\tY-\tX_p}_{\lucky^{\ast}}+\rho\max_k(\sqrt{n_k}+\sqrt{H^{K-1}})+\lambda\right)\sum_{k=1}^{K}\|\mDelta_k'\|_{\ast}\\
&\quad+\left(\norm{\tY-\tX_p}_{\lucky^{\ast}}+\rho\max_k(\sqrt{n_k}+\sqrt{H^{K-1}})-\lambda\right)\sum_{k=1}^{K}\|\mDelta_k''\|_{\ast}.
\end{align*}
Thus if we take $\lambda\geq
\sigma\norm{\tE}_{\lucky^\ast}+\norm{\tX^\ast-\tX_p}_{\lucky^\ast}+\rho\max_k(\sqrt{n_k}+\sqrt{H^{K-1}})$,
the second term in the right-hand side can be ignored and following the
derivation leading to inequality \eqref{ineq:step5} and dividing both
sides by $\sqrt{\sum_{k=1}^{K}\|\mDelta_k\|_F^2}$, we have
\begin{align*}
 \norm{\tX_p-\hat{\tX}}_F \leq \sqrt{\sum_{k=1}^{K}\|\mDelta_k\|_F^2}
\leq 4\sqrt{2}\lambda\sqrt{\sum_{k=1}^{K}r_k},
\end{align*}
where the first inequality follows from the assumption.

The final step of the proof is to bound the norm
 $\norm{\tE}_{\lucky^\ast}$ with sufficiently high probability. By
Lemma \ref{lem:dualnorm},
\begin{align*}
 \norm{\tE}_{\lucky^\ast}=\max_k\|\mE_{(k)}\mS^{(k)}\|.
\end{align*}
Therefore, taking the union bound, we have
\begin{align}
\label{ineq:step7}
\P\left(\max_k\|\mE_{(k)}\mS^{(k)}\|\geq t\right) \leq
 \sum_{k=1}^{K}\P\left(\|\mE_{(k)}\mS^{(k)}\|\geq t\right).
\end{align}
Now since each $\mE_{(k)}\mS^{(k)}\in\R^{n_k\times H^{K-1}}$ is a random matrix with
 i.i.d. standard Gaussian entries,
\begin{align*}
 \P\left(\|\mE_{(k)}\mS^{(k)}\|\geq \sqrt{n_k}+\sqrt{H^{K-1}}+t\right)\leq \exp(-t^2/(2\sigma^2)).
\end{align*}
Therefore, choosing
$t=\max_k(\sqrt{n_k}+\sqrt{H^{K-1}})+\sqrt{2\log(K/\delta)}$ in inequality \eqref{ineq:step7}, we have
\begin{align*}
\max_k\|\mE_{(k)}\mS^{(k)}\|\leq \max_k(\sqrt{n_k}+\sqrt{H^{K-1}})+\sqrt{2\log(K/\delta)},
\end{align*}
with probability at least $1-\delta$. Plugging this into the condition
for the regularization parameter $\lambda$, we obtain what we wanted.

\subsection{Proof of Lemma \ref{lemma:span}}
\label{sec:proof-lemma-span}
\begin{proof}

\begin{enumerate}[i)]
    \item Let $\otimes_{k'\in [K] \backslash k} \mU^{(k')}$ denote
              $\mU^{(1)}  \otimes \cdots \otimes  \mU^{(k-1)} \otimes
              \mU^{(k+1)} \otimes \cdots \otimes \mU^{(K)}$.
         We have
         \[
             \begin{aligned}
                 \mX^*_{(k)} & = \mU^{(k)} \mC_{(k)} \left( \otimes_{k'\in [K]
         \backslash k} \mU^{(k')} \right)^\T \\
                  & = \mU^{(k)} \mC_{(k)} \left( \otimes_{k'\in [K]
 \backslash k} ( \mU^{(k')} )^\T \right) \\
             & = \mP^{(k)} \mLambda^{(k)} (  \mQ^{(k)} )^\T.
             \end{aligned}
         \]
         Because of the minimality of the Tucker decomposition
         \eqref{eq:rank-r-tucker},
         $\mX^*_{(k)}$, $\mC_{(k)}$ and $\mU^{(k)}$ are all of rank $R$, for all
         $k \in [K]$.
         Therefore, both $\mC_{(k)}$ and $\otimes_{k'\in [K] \backslash k}
         (\mU^{(k')} )^\T$ have full row rank.
         
         Hence, $\mC_{(k)}$ has a Moore-Penrose pseudo inverse
         $\mC_{(k)}^\dagger$ such that $\mC_{(k)} \mC_{(k)}^\dagger = \mI$, and
         so does $\otimes_{k'\in [K] \backslash k}
         (\mU^{(k')} )^\T$.
         As a result, we have
         \[
              \mU^{(k)}  = \mP^{(k)} \mLambda^{(k)} ( \mQ^{(k)} )^\T
              \left( \otimes_{k'\in [K] \backslash k} ( \mU^{(k')} )^\T \right)^\dagger \mC_{(k)}^\dagger.
         \]

 \item Similarly, we have
     \[ \mQ^{(k)} \mLambda^{(k)} (\mP^{(k)})^\T = \left( \otimes_{k' \in [K] \backslash k} \mU^{(k')}  \right) \mC^\T_{(k)} ( \mU^{(k)} )^\T. \]
     By the definition of SVD, $\mLambda$ is invertible and $(\mP^{(k)})^\T \mP^{(k)} = \mI$.
     Hence, 
     \[ \mQ^{(k)}= \left( \otimes_{k' \in [K] \backslash k} \mU^{(k')}  \right)
     \mC^\T_{(k)} ( \mU^{(k)} )^\T \mP^{(k)} ( \mLambda^{(k)})^{-1}.\]
     This means
     $ \mQ^{(k)} \in \text{span}
      \left( \otimes_{k' \in [K] \backslash k} \mU^{(k')}  \right) $ and we then
      conclude
     $ \mQ^{(k)} \in \text{span}
     \left( \otimes_{k' \in [K] \backslash k} \mP^{(k')}  \right) $ using (i).
      
\end{enumerate}
\end{proof}

\subsection{Proof of Lemma \ref{lem:coherence}}
\label{sec:proof-lemma-coherence}
Expanding $\tX_p$ and $\hat{\tX}$, we have
\begin{align*}
 \norm{\tX_p-\hat{\tX}}_F^2&=\norm{\textstyle\sum_{k=1}^{K}\fold_k\bigl(\mDelta_k{\mS^{(k)}}^\T\bigr)}_F^2\\
&=\sum_{k=1}^{K}\|\mDelta_k\|_F^2+\sum_{k\neq \ell}\ip{\fold_k(\mDelta_k{\mS^{(k)}}^\T),\fold_{\ell}(\mDelta_{\ell}{\mS^{(\ell)}}^\T)}\\
&=\sum_{k=1}^{K}\|\mDelta_k\|_F^2+\sum_{k\neq
 \ell}\ip{\fold_k(\mDelta_k)\times_{k'\neq k}\wh{\mP}^{(k')},\fold_{\ell}(\mDelta_{\ell})\times_{\ell'\neq\ell}\wh{\mP}^{(\ell')}}\\
&=\sum_{k=1}^{K}\|\mDelta_k\|_F^2+\sum_{k\neq \ell}\ip{\fold_k(\mDelta_k)\times_{\ell}\wh{\mP}^{(\ell)},\fold_{\ell}(\mDelta_{\ell})\times_{k}\wh{\mP}^{(k)}}\\
&=\sum_{k=1}^{K}\|\mDelta_k\|_F^2-\sum_{k\neq
 \ell}\ip{\mDelta_k(\mI_{H}\otimes \cdots \otimes \wh{\mP}^{(\ell)}\otimes \cdots\otimes\mI_{H})^\T,\wh{\mP}^{(k)}(\fold_{\ell}(\mDelta_{\ell}))_{(k)}}\\
&\geq\sum_{k=1}^{K}\|\mDelta_k\|_F^2-\sum_{k\neq
 \ell}\|\mDelta_k\|_{\ast}\cdot\|(\fold_{\ell}(\mDelta_{\ell}))_{(k)}\|\\
&\geq\sum_{k=1}^{K}\|\mDelta_k\|_F^2-2{\rho}\max_k(\sqrt{n_k}+\sqrt{H^{K-1}})\sum_{k=1}^{K}\|\mDelta_k\|_{\ast},
\end{align*}
from which the lemma holds. Here we regarded
$\fold_k(\mDelta_k\mS^{(k)})$ as a Tucker decomposition with the core
tensor $\fold_k(\mDelta_k)$ and factor matrices $\wh{\mP}^{(k')}$ for
$k'\neq k$. Most of the factors except for $k$ and $\ell$ cancel out when
calculating the inner product between two such tensors in the third
line, because
$(\wh{\mP}^{(k')})^\T\wh{\mP}^{(k')}=\mI_{H}$. After unfolding
the inner product at the $k$th mode in the fifth
line, we notice that
a multiplication by an ortho-normal matrix does not affect the nuclear norm
or the spectral norm. In the last line we used $\{\mDelta_k\}_{k=1}^{K}\in\mathcal{M}(2\rho)$, which follows
from the assumption that both $\{\mM_p^{(k)}\}_{k=1}^{K},\{\hat{\mM}^{(k)}\}_{k=1}^{K}\in\mathcal{M}(\rho)$.
\end{proof}

\clearpage
\section{Generalization of Theorem \ref{thm:leftsv} to the higher rank case}
\label{sec:leftsv_rankR}
\begin{thm}
    \label{thm:leftsv_rankR}
    Suppose that $\mX = \sum_{r=1}^R \beta_r \vu_r \vv_r^\T$, where $\vu_1,
    \ldots, \vu_R \in \R^n$ and $\vv_1, \ldots, \vv_R \in \R^m$ are unit
    orthogonal vectors respectively. Let $\tilde{\mX} = \mX + \sigma
    \mE$ be the noisy observation of $\mX$.
    There exists an universal constant $C$ such that
    with probability at least $1 - 3e^{-n}$, 
    if $m /n \geq C (\beta_1 / \beta_R)^4$, then 
    \begin{align*}
        \abs{ \cos(\hat{\mU}, \mU) } \geq 
        \begin{cases}
            1 - \dfrac{Cmn}{(\beta_R / \sigma)^4},  & \text{if} \quad
            \dfrac{\beta_1}{ \sqrt{m} }<  \sigma \leq \dfrac{\beta_R}{ (Cmn)^\frac{1}{4} }, \\
            1 - \dfrac{Cn (\beta_1 / \beta_R)^2 }{ (\beta_R / \sigma)^2},
            & \text{if} \quad \sigma \leq \dfrac{\beta_1}{\sqrt{m}},
        \end{cases}
    \end{align*}
    otherwise,
        $\abs{ \cos(\hat{\mU}, \mU) } \geq 1 - \dfrac{Cn (\beta_1 /
            \beta_R)^2 }{ (\beta_R / \sigma)^2}$
    if $ \sigma \leq \beta^2_R / (Cn)^\frac{1}{2} \beta_1$.
\end{thm}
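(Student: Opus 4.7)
The plan is to mirror the structure of the rank-one proof (Theorem~\ref{thm:leftsv}) but at the level of subspaces, replacing the elementary eigenvector perturbation argument with a Davis--Kahan-type bound for the top-$R$ invariant subspace. Form the Gram matrix $\tilde{\mX}\tilde{\mX}^\T$ and, writing $\mU=[\vu_1,\ldots,\vu_R]$, $\mV=[\vv_1,\ldots,\vv_R]$, and $\mDelta=\text{diag}(\beta_1,\ldots,\beta_R)$, split it as
\begin{align*}
\tilde{\mX}\tilde{\mX}^\T
= \underbrace{\mU\mDelta^2\mU^\T + m\sigma^2\mI}_{\mB}
+ \underbrace{\sigma^2(\mE\mE^\T - m\mI) + \sigma\mU\mDelta\mV^\T\mE^\T + \sigma\mE\mV\mDelta\mU^\T}_{\mG}.
\end{align*}
The matrix $\mB$ has top-$R$ eigenvectors $\mU$ with eigenvalues $\beta_r^2+m\sigma^2$ and all remaining eigenvalues equal to $m\sigma^2$, so the eigengap separating the signal subspace from its complement is exactly $\beta_R^2$. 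By Davis--Kahan, provided $\beta_R^2 \geq 2\|\mG\|$, one gets $\|\sin\Theta(\hat{\mU},\mU)\|\leq 2\|\mG\|/\beta_R^2$, and then $|\cos(\hat{\mU},\mU)|\geq 1-4\|\mG\|^2/\beta_R^4$, reducing the whole theorem to a spectral bound on $\mG$.

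Next I would bound $\|\mG\|$ term by term, using the two quantitatively different noise contributions that already appeared in the rank-one case. The centered Wishart piece $\sigma^2(\mE\mE^\T-m\mI)$ is controlled by the same $\epsilon$-net/$\chi^2$ argument as in Lemma~\ref{lemma:operator-norm}, giving $O(\sigma^2\sqrt{mn})$ with probability $1-2e^{-n}$. The cross term is the place where the generalization beyond rank one actually happens: since $\mV$ has orthonormal columns, the matrix $\mE\mV$ is still an $n\times R$ matrix with i.i.d.\ standard Gaussian entries, so by the standard bound of \citet{Ver10} we have $\|\mE\mV\|\lesssim \sqrt{n}+\sqrt{R}\lesssim \sqrt{n}$ (using $R\leq n$) with probability $1-e^{-n}$. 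Multiplying by $\|\mDelta\|=\beta_1$ and taking a union bound yields
\begin{align*}
\|\mG\| \;\leq\; \bar{C}\bigl(\sigma^2\sqrt{mn}+\sigma\beta_1\sqrt{n}\bigr)
\end{align*}
with probability at least $1-3e^{-n}$, for a universal constant $\bar{C}$.

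Finally I would read off the two regimes by comparing which term dominates $\|\mG\|$. When $\sigma\geq \beta_1/\sqrt{m}$, the Wishart term dominates, $\|\mG\|\lesssim \sigma^2\sqrt{mn}$, and the condition $\beta_R^2\geq 2\|\mG\|$ becomes $\sigma\leq \beta_R/(Cmn)^{1/4}$, which yields the first branch $|\cos(\hat{\mU},\mU)|\geq 1-Cmn/(\beta_R/\sigma)^4$; for this regime to be non-empty one needs $\beta_1/\sqrt{m}<\beta_R/(Cmn)^{1/4}$, i.e.\ $m/n\geq C(\beta_1/\beta_R)^4$, which is exactly the theorem's hypothesis. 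When $\sigma\leq \beta_1/\sqrt{m}$ the Gaussian cross term dominates, $\|\mG\|\lesssim \sigma\beta_1\sqrt{n}$, and the eigengap condition becomes $\sigma\leq \beta_R^2/(C\sqrt{n}\beta_1)$, giving the second branch $|\cos(\hat{\mU},\mU)|\geq 1-Cn(\beta_1/\beta_R)^2/(\beta_R/\sigma)^2$. The main obstacle is keeping the right dependence on the condition number $\beta_1/\beta_R$: the cross term's spectral norm picks up a factor $\beta_1$ rather than $\beta_R$, and one must be careful that the Davis--Kahan eigengap $\beta_R^2$ only dominates after one enforces the stronger signal-to-noise condition, which is precisely what produces the $(\beta_1/\beta_R)^2$ and $(\beta_1/\beta_R)^4$ factors in the final statement.
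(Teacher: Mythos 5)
Your proposal is correct and follows essentially the same route as the paper's proof in Appendix~\ref{sec:leftsv_rankR}: the same decomposition of $\tilde{\mX}\tilde{\mX}^\T$ into $\mB+\mG$, a subspace perturbation bound with eigengap $\beta_R^2$ giving $2\|\mG\|/\beta_R^2$ (the paper uses Weyl plus Wedin where you invoke Davis--Kahan), and the same case analysis on which noise term dominates under the hypothesis $m/n\geq C(\beta_1/\beta_R)^4$. The only cosmetic difference is that the paper bounds $\|\mG\|$ by a single $\epsilon$-net argument on the quadratic form $\vx^\T\mG\vx$ with Cauchy--Schwarz and a $\chi^2_R$ tail handling the cross term, whereas you split $\mG$ and bound $\|\mE\mV\|$ directly as an $n\times R$ Gaussian matrix; both yield $\|\mG\|\lesssim\sigma^2\sqrt{mn}+\sigma\beta_1\sqrt{n}$.
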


Suppose that $\mX = \sum_{r=1}^R \beta^2_r \vu_r \vv_r^\T$ and $\tilde{\mX} = \mX + \sigma
\mE$. We consider the second moment of $\tilde{\mX}$:
\begin{align*}
    \tilde{\mX}\tilde{\mX}^\T & = \sum_{r=1}^R \beta^2_r \vu_r \vu_r^\T +
    \sigma \left( \sum_{r=1}^R \beta_r \big(\vu_r \vv^\T_r \mE^\T + \mE \vv_r \vu_r^\T \big) \right)
    + \sigma^2 \mE \mE^\T\\
    & = \overbrace{ \sum_{r=1}^R \beta^2_r \vu_r \vu_r^\T + m \sigma^2 \mI}^{\mB} +\\
 & \hspace{1cm}
 \overbrace{\sigma^2 \mE \mE^\T - m \sigma^2
     \mI + \sigma\left( \sum_{i=1}^R \beta_r \big( \vu_r \vv_r^\T \mE^\T + \mE
         \vv_r \vu^\T_r \big) \right) }^{\mG}.
\end{align*}
The eigenvalue decomposition of $\mB$ can be written as
\[ \mB = [ \mU \; \mU_2]
\begin{bmatrix}
\Sigma + m \sigma^2 \mI  & \\ & m\sigma^2 \mI
\end{bmatrix}
\begin{bmatrix}
\mU^\T \\ \mU_2^\T
\end{bmatrix},
\]
where $\mU \in \R^{n \times R}$ and $\Sigma = \text{diag}(\beta^2_1, \ldots,
\beta^2_R)$. Similarly, the eigenvalue decomposition of $\tilde{\mX}\tilde{\mX}^\T$ can be written as
\[ \tilde{\mX}\tilde{\mX}^\T = [ \hat{\mU} \; \hat{\mU}_2]
\begin{bmatrix}
    \hat{\Sigma} & \\ & \hat{\Sigma}'
\end{bmatrix}
\begin{bmatrix}
    \hat{\mU}^\T \\ \hat{\mU}_2^\T
\end{bmatrix},
\]
where $\hat{\Sigma} = \text{diag}(\hat{\lambda}_1, \ldots, \hat{\lambda}_R)$ and
$\hat{\Sigma}' = \text{diag}(\hat{\lambda}_{R+1}, \ldots, \hat{\lambda}_n)$ 
s.t. $\hat{\lambda}_1 \geq \cdots \geq \hat{\lambda}_n$
are the eigenvalues of $\tilde{\mX}\tilde{\mX}^T$.

We first show a deterministic lower bound for $\abs{\sin(\hat{\mU},\mU)}$ assuming
$\beta^2_R \geq 2\|\mG\|$. Then we bound the spectral norm $\|\mG\|$ of the noise
term (Lemma \ref{lemma:operator-norm}) and derive the sufficient condition for
$\beta^2_R$.

The maximum singular value of $m\sigma^2 \mI$ is $m \sigma^2$.
The minimum singular value of $\hat{\Sigma}$ is $|\hat{\lambda}_R|$.
By Wely's theorem,  
$ \| \mG \| \geq \abs{ \hat{\lambda}_R - \beta^2_R - m\sigma^2 }$, which means
\[ \hat{\lambda}_R \geq  m\sigma^2 + \beta^2_R - \| \mG \| . \]

Let $\mR = \mG \hat{\mU}$. Since $\beta^2_R \geq 2\| \mG \|$, 
we can apply the Wedin theorem and obtain
\[
    \abs{\sin(\hat{\mU}, \mU)}  = \| \mU^\T_2 \hat{\mU} \|
    \leq \frac{ \| \mR \|} { \beta^2_R - \| \mG  \|}
    = \frac{ \| \mG \hat{\mU}\|} { \beta^2_R - \| \mG  \|}
    \leq \frac{ \| \mG \| }{ \beta^2_R  - \| \mG \|}
    \leq \frac{ 2 \| \mG \| }{ \beta^2_R },
\]
where we used the property that the spectral norm is sub-multiplicative and
$\| \hat{\mU} \| = 1$ in the second to last step.

Therefore,
 \[ \abs{ \cos(\hat{\mU}, \mU) } \geq \sqrt{1 - \frac{4\|\mG\|^2}{\beta^4_R}}
 \geq 1 - \frac{4\|\mG\|^2}{\beta^4_R},   \]
if $\beta^2_R \geq 2\|\mG\|$. It follows from Lemma
\ref{lemma:operator-norm-rankR} (shown below) that with probability at least
$1 - 3e^{-n}$
 \[
     \| \mG \| \leq
 \begin{cases}
     2\bar{C} \sigma^2 \sqrt{mn}, & \text{if} \quad \beta_1/\sigma <\sqrt{m},\\
     2\bar{C} \sigma \sqrt{n} \beta_1, & \text{otherwise},
 \end{cases}
 \]
where $\bar{C}$ is an universal constant.

Let $C = 16\bar{C}^2$.  
Now consider the first situation where $ m / n > C ( \beta_1 / \beta_R)^4 $. 
If $\sigma > \frac{ \beta_1}{ \sqrt{m}}$, we have $\| \mG
\| \leq \frac{\sigma^2}{2} (Cmn)^{\frac{1}{2}}$. Meanwhile, if $\sigma \leq
\frac{\beta_R}{(Cmn)^\frac{1}{4}}$,
then we have $\beta^2_R  \geq \sigma^2 (Cmn)^{\frac{1}{2}} \geq 2\| \mG \| $.
Combining these two conditions we obtain the first case in the theorem. When
$\sigma \leq \frac{ \beta_1 } { \sqrt{m} }$,  we can see that $\| \mG \| \leq
\frac{\sigma}{2} (Cn)^{\frac{1}{2}} \beta_1$. Moreover, since
$m / n > C ( \beta_1 / \beta_R)^4$, it is implied that
$\sigma \leq \beta^2_R / (Cn)^{\frac{1}{2}} \beta_1 $ and thus
$\beta^2_R \geq 2 \| \mG \|$. This gives us the second case.

On the other hand, if $ m / n \leq C (\beta_1/ \beta_R)^4$, we
require $\sigma \leq \beta^2_R / (Cn)^\frac{1}{2} \beta_1$ to
obtain the last case in the theorem.

\begin{lemma}
\label{lemma:operator-norm-rankR}
Let $\mG$ be constructed as in the proof of Theorem \ref{thm:leftsv_rankR}. If
$m\geq n$, 
there exists an universal constant $\bar{C}$ such that 
\[ \| \mG \| \leq \bar{C} \sigma^2 \left(\sqrt{mn} + \sqrt{n} \beta_1 / \sigma \right), \]
with probability at least $1-3e^{-n}$.
\end{lemma}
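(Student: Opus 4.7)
The plan is to mirror the $\epsilon$-net argument used in Lemma \ref{lemma:operator-norm}, adapting the single-direction bound to account for the $R$ rank-one contributions in the cross term. The key structural observation is that orthonormality of $\{\vu_r\}$ and $\{\vv_r\}$ will keep the effective variance of the cross term bounded by $\beta_1^2$, regardless of $R$, so the rank does not appear in the final bound.

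First, I would fix a unit vector $\vx \in \tS^{n-1}$ and expand the quadratic form. Setting $\vz = \mE^\T \vx \sim \tN(0, \mI_m)$, I get
\begin{align*}
\vx^\T \mG \vx = \sigma^2(\|\vz\|^2 - m) + 2\sigma \sum_{r=1}^R \beta_r (\vu_r^\T \vx)(\vv_r^\T \vz).
\end{align*}
The first summand is a centered $\chi^2$ quadratic form which I would control using the Laurent--Massart tail bound exactly as in the rank-one proof: $\P(|\|\vz\|^2 - m| > 2(\sqrt{4mn} + 4n)) \leq 2e^{-4n}$. The second summand, conditional on the orthonormality of the $\vv_r$, is Gaussian: writing $\gamma_r := \vv_r^\T \vz$, the vector $(\gamma_1, \ldots, \gamma_R)$ is i.i.d.\ standard normal (since $\{\vv_r\}$ are orthonormal and $\vz$ is isotropic), so $\sum_r \beta_r (\vu_r^\T \vx) \gamma_r$ is Gaussian with variance $\sum_r \beta_r^2 (\vu_r^\T \vx)^2 \leq \beta_1^2 \sum_r (\vu_r^\T \vx)^2 \leq \beta_1^2$, using orthonormality of $\{\vu_r\}$. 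A standard Gaussian tail then gives $\P(|\sum_r \beta_r (\vu_r^\T \vx) \gamma_r | > \beta_1 \sqrt{8n}) \leq 2 e^{-4n}$.

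Combining the two bounds via a union bound yields, for a fixed $\vx$,
\begin{align*}
\P\bigl(|\vx^\T \mG \vx| > \sigma^2 (2\sqrt{4mn} + 8n) + 2 \sigma \beta_1 \sqrt{8n}\bigr) \leq 4e^{-4n}.
\end{align*}
I would then discretize $\tS^{n-1}$ with a $1/4$-net $\tN_{1/4}$ of size at most $9^n$ (Lemma 5.2 of \citet{Ver10}) and invoke Lemma 5.4 of \citet{Ver10}, which bounds $\|\mG\| \leq 2 \sup_{\vx \in \tN_{1/4}} |\vx^\T \mG \vx|$. A union bound over the net gives the supremum bound with failure probability at most $4 \cdot 9^n e^{-4n} \leq 3e^{-n}$. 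Collecting terms and absorbing $\sqrt{n} \leq \sqrt{mn}$ (since $m \geq n$) yields $\|\mG\| \leq \bar{C} \sigma^2(\sqrt{mn} + \sqrt{n}\, \beta_1/\sigma)$ for a universal constant $\bar{C}$.

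The one place that requires more care than the rank-one proof is controlling the variance of the Gaussian cross term uniformly in $R$: the bound $\sum_r \beta_r^2 (\vu_r^\T \vx)^2 \leq \beta_1^2$ is crucial and depends on the orthonormality of both $\{\vu_r\}$ and $\{\vv_r\}$ assumed in Theorem \ref{thm:leftsv_rankR}. Without orthonormality of the $\vu_r$'s we could only use $\sum_r (\vu_r^\T \vx)^2 \leq R$, which would introduce an extra $\sqrt{R}$ factor. Apart from this observation, the argument is a direct rank-one analogue, and the remaining computations are bookkeeping to ensure the constants match.
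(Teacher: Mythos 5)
Your proof is correct and follows the same overall strategy as the paper's: expand the quadratic form $\vx^\T\mG\vx$ for fixed $\vx$, control the Wishart part via the Laurent--Massart $\chi^2$ tail, control the cross term, and pass to $\|\mG\|$ via a $1/4$-net with Lemma 5.2 and 5.4 of \citet{Ver10}. The one place you diverge is the cross term: the paper bounds $\bigl|\sum_r \beta_r\gamma_r(\vu_r^\T\vx)\bigr|$ by Cauchy--Schwarz as $\beta_1\|\vgamma\|\,\|\vomega\|$ with $\|\vomega\|\le 1$ and then applies a $\chi^2_R$ tail to $\|\vgamma\|^2$, which introduces a $\sqrt{R+8n+4\sqrt{Rn}}$ factor that must later be absorbed using $R\le n$; you instead observe that, for fixed $\vx$, the cross term is a single one-dimensional Gaussian with variance $\sum_r\beta_r^2(\vu_r^\T\vx)^2\le\beta_1^2$ (by orthonormality of the $\vv_r$ for the i.i.d.\ claim and Bessel's inequality for the $\vu_r$), and apply a scalar Gaussian tail. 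Your treatment is slightly cleaner and sharper: it removes the $\chi^2_R$ deviation entirely and dispenses with the assumption $R\le n$ at this step, at no cost to the final constant. Both yield the stated bound with probability at least $1-3e^{-n}$.
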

\begin{proof}
The proof is an $\epsilon$-net argument.
Let
\[ 
    \lambda = 2\sigma^2 \left(\sqrt{4mn} + 4n +  \sqrt{ R + 8n + 4 \sqrt{Rn} }
        \cdot \beta_1 / \sigma \right).
\]
The goal is to control $ | \vx^\T \mG \vx |$ for all the vectors $\vx$ on the unit Euclidean sphere
$\tS^{n-1}$. In order to do this, we first bound the probability of the tail event $|\vx^\T \mG \vx| > \lambda$,
for any fixed $\vx \in \tS^{n-1}$.
Then we bound the probability that $|\vx^\T \mG \vx| > \lambda $ for all the vectors in a 
$\epsilon$-net $\tN_{\epsilon}$. Finally, we establish the connection
between $\sup_{\vx \in \tN_{\epsilon} } | \vx^\T \mG \vx |$ and $\| \mG \|$.

To bound $\P( |\vx^\T \mG \vx| > \lambda)$ for a fix $\vx \in
 \tS^{n-1}$, we expand $\vx^\T \mG \vx$ as
 \[ 
     \vx^\T\mG \vx=\sigma^2 (\| \vz \|^2 - m) + 2\sigma\sum_{r=1}^R \beta_r
     \gamma_r (\vu_r^\T\vx) , 
 \]
where $\vz = \mE^\T\vx$ and $\gamma_r=\vv_r^\T\vz$. It is easy to see that
$\gamma_r \sim\tN(0, 1)$, 
$\vz \sim \tN(0, \mI)$ and $\|\vz\|^2$ is $\chi^2$ distributed with $m$
degrees of freedom.

Let $\vgamma = [\gamma_1, \ldots, \gamma_R]$ and $\vomega = [ \vu^\T_1 x,
\ldots, \vu^\T_R x]$.
We have
\begin{align*}
    |\vx^\T \mG \vx| \leq & \sigma^2 \big|  \| \vz \|^2 - m \big|  + 2\sigma \bigg| \sum_{r=1}^R  \beta_r \gamma_r  (\vu^\T_r \vx)  \bigg| \\
                     \leq & \sigma^2 \big|  \| \vz \|^2 - m \big| + 2 \sigma
    \sum_{r=1}^R \max_{r \in [R]} | \beta_r | \cdot | \gamma_r | \cdot |\vu^\T_r \vx |  \\
    \leq &  \sigma^2 \big|  \| \vz \|^2 - m \big| + 2 \sigma \beta_1
    \cdot  \| \vgamma \| \cdot \| \vomega \| \\
    \leq &  \sigma^2 \big|  \| \vz \|^2 - m \big| + 2 \sigma \beta_1
    \cdot \| \vgamma \|,
\end{align*}
where we used the Cauchy-Schwarz inequality in the second to last line and the
fact $\| \vomega \| \leq 1$ in the last line. 
Note that $\gamma_1, \ldots, \gamma_R$ are i.i.d standard Gaussian distributed so that
$\| \vgamma \|^2$ is $\chi^2$ distributed with $R$ degrees.

First we bound the deviation of the $\chi^2_m$ term. By the corollary of
 Lemma 1 in \citet{LauMas00}, we have
\begin{align}
\label{ineq:chi2}
    \P(\big| \| \vz \|^2 - m \big| > \lambda_1) \leq  2 e^{-4n},
\end{align}
where $\lambda_1=2(\sqrt{4mn}+4n)$.

Next we bound the $\chi^2_R$ term. Similarly, we have
\begin{align}
\label{ineq:chi2_R}
    \P( \| \vgamma \|^2 - R > \lambda_2) \leq  e^{-4n},
\end{align}
where $\lambda_2 = 2(\sqrt{4Rn}+4n)$.

Combining inequalities \eqref{ineq:chi2} and \eqref{ineq:chi2_R}, we have
\begin{align*}
&\P(|\vx^\T \mG \vx| > \lambda)\\
&\leq\; \P\left( \sigma^2 \big|  \| \vz \|^2 - m \big|  +
2\sigma \beta_1   \| \vgamma \| > \sigma^2\lambda_1 +
2 \sigma \beta_1  \sqrt{ R + \lambda_2} \right)\\
&\leq \; \P\left( \big| \| \vz \|^2 - m \big| > \lambda_1 
    \lor \| \vgamma \| > \sqrt{ R + \lambda_2} \right) \\
    &\leq \; \P\left( \big| \| \vz \|^2 - m \big| > \lambda_1 \right) +
    \P \left( \|\vgamma\| >  \sqrt{ R + \lambda_2 } \right)\\
    &\leq \; 3e^{-4n},
\end{align*}
where the second to last line follows from the union bound.

Furthermore, using Lemma 5.2 and 5.4 of
\citet{Ver10}, for any $\epsilon \in [0, 1)$, it holds that 
\[
    |\tN_\epsilon| \leq (1+ 2/\epsilon)^n, 
\]
and 
\begin{equation*}
    \| \mG \| \leq (1-2\epsilon)^{-1} \sup_{\vx \in \tN_\epsilon} | \vx^\T \mG
    \vx |.
\end{equation*}
Taking the union bound over all the vectors in $\tN_{1/4}$, we obtain
\[
    \P( \| \mG \| \leq 2 \lambda ) \leq 
\P\left(\sup_{\vx \in \tN_{1/4} } |\vx^\T \mG \vx| > \lambda\right) \leq
| \tN_{1/4} | 3e^{-4n} < 3e^{-n}.
\]
Finally, the statement is obtained by noticing that 
$n \leq m$ and $ R \leq n $.
\end{proof}

\end{document}